\documentclass[11pt,a4paper]{article}
\usepackage[a4paper]{geometry}

\usepackage[dvipsnames]{xcolor}
\definecolor{DarkBlue}{HTML}{02079e}
\definecolor{DarkRed}{HTML}{9e0202}
\usepackage[linktocpage=true]{hyperref}
\hypersetup{
	colorlinks=true,
	urlcolor=DarkBlue,
	citecolor=DarkBlue,
	linkcolor=DarkRed
}

\usepackage{tocloft}
\usepackage{stmaryrd}
\usepackage{caption}

\usepackage{natbib}
\usepackage{amsthm}
\newtheorem{definition}{Definition}[section]
\newtheorem{theorem}[definition]{Theorem}
\newtheorem{lemma}[definition]{Lemma}

\theoremstyle{definition}

\newtheorem{remark}[definition]{Remark}

\usepackage{booktabs} 
\usepackage{xcolor}   
\usepackage{multirow} 
\usepackage{tabularx}

\usepackage{algorithm}
\usepackage{algpseudocode}

\hypersetup{
   breaklinks=true,   
   colorlinks=true,  
   linkcolor=Bittersweet,
   citecolor=Periwinkle,
   urlcolor=gray
}

\usepackage{algorithm}
\usepackage{algorithmicx}
\usepackage{algpseudocode}

\usepackage{amsmath}
\usepackage{amssymb}
\usepackage{mathtools}
\usepackage{mathrsfs}
\usepackage{bbm}
\usepackage{bm}

\usepackage{enumitem,kantlipsum}

\usepackage{subcaption}

\usepackage{mdframed}

\newcommand{\Q}{\mathbb{Q}}
\newcommand{\R}{\mathbb{R}}

\newcommand{\norm}[1]{\lVert#1\rVert}

\DeclarePairedDelimiterX{\infdivx}[2]{(}{)}{#1\;\delimsize\|\;#2}

\newcommand{\N}{\mathcal{N}}

\DeclareMathOperator{\SO}{SO}

\DeclareMathOperator{\rank}{rank}
\DeclareMathOperator{\OSA}{OSA}
\DeclareMathOperator{\MOSA}{M-OSA}
\DeclareMathOperator{\MLP}{MLP}
\DeclareMathOperator{\tr}{tr}

\let\vec\relax
\DeclareMathOperator{\vec}{vec}

\newcommand{\W}{\mathbf{W}}
\newcommand{\A}{\mathbf{A}}
\newcommand{\X}{\mathbf{X}}

\let\S\relax
\newcommand{\S}{\mathbf{S}}

\let\Q\relax
\newcommand{\Q}{\mathbf{Q}}

\newcommand{\K}{\mathbf{K}}
\newcommand{\B}{\mathbf{B}}
\newcommand{\I}{\mathbf{I}}
\newcommand{\M}{\mathbf{M}}
\newcommand{\U}{\mathbf{U}}
\newcommand{\J}{\mathbf{J}}
\newcommand{\D}{\mathbf{D}}
\newcommand{\EE}{\mathbf{E}}
\newcommand{\Y}{\mathbf{Y}}
\newcommand{\V}{\mathbf{V}}

\title{Orthogonal Self-Attention}
\author{Leo Zhang${^1}$\footnote{Corresponding author: \texttt{leo.zhang@stx.ox.ac.uk}}\hspace{1mm}  and James Martens\footnote{Corresponding author: \texttt{james.martens@gmail.com}} \\ \small \textit{$^1$Department of Statistics, University of Oxford}}
\date{}

\begin{document}

\maketitle

\begin{abstract}
Softmax Self-Attention (SSA) is a key component of Transformer architectures.
However, when utilised within skipless architectures, which aim to improve representation learning, recent work has highlighted the inherent instability of SSA due to inducing rank collapse and poorly-conditioned Jacobians.
In this work, we design a novel attention mechanism: Orthogonal Self-Attention (OSA), which aims to bypass these issues with SSA, in order to allow for (non-causal) Transformers without skip connections and normalisation layers to be more easily trained.
In particular, OSA parametrises the attention matrix to be orthogonal via mapping a skew-symmetric matrix, formed from query-key values, through the matrix exponential.
We show that this can be practically implemented, by exploiting the low-rank structure of our query-key values, resulting in the computational complexity and memory cost of OSA scaling linearly with sequence length.
Furthermore, we derive an initialisation scheme for which we prove ensures that the Jacobian of OSA is well-conditioned.
\end{abstract}


\section{Introduction}

Skip connections \citep{he2016deep} have become an ubiquitous feature of neural network architectures from facilitating the stable training of deep models.
However, despite their success, prior works \citep{veit2016residual, gromov2024unreasonable, zhang2024residual} have raised the concern that the benefits of skip connections, namely ease of training, may be obscuring deeper issues, in terms of representation learning, that skip connections induce. 
The main point behind these criticisms is that skip connections appear to bias models away from properly utilising the full depth of their architectures.
For instance, \cite{ji2025cutting} argues that since skip connections continually reintroduce earlier features into deeper layers, they disrupt the learning of hierarchical and progressively more abstract representations, fundamentally harming representation learning.

Motivated by this line of reasoning, we explore designing Transformers that are able to be trained stably without skip connections.
Previous works \citep{he2023deep, ji2025cutting} have tackled this through  modifications to Softmax Self-Attention (SSA) \citep{vaswani2017attention} and weight initialisations to improve signal propagation and the conditioning of the Jacobian matrix.
However, these works restrict themselves to standard Softmax-based Transformers which appear to be inherently unstable without skip connections \citep{dong2021attention, ji2025always} due to SSA.

Therefore, due to the fundamental issues with SSA, in this work, we propose \textit{Orthogonal Self-Attention (OSA)} which attempts to circumvent the training instability of skipless SSA by designing the attention matrix to be orthogonal.
This is motivated by the rank collapse phenomenon associated with SSA \citep{dong2021attention, noci2022signal} where token representations quickly converge to a rank-1 matrix with depth.
In contrast, by enforcing the attention matrix to be orthogonal in OSA, we preserve the rank of representations, which should mitigate against rank collapse in skipless architectures.

In terms of how we implement OSA, we parametrise the attention matrix via the matrix exponential, mapping skew-symmetric matrices, computed from query-key values, to the manifold of (special) orthogonal matrices.
To make the use of the matrix exponential tractable in practice, we present a scheme for efficiently computing the matrix exponential through exploiting our low-rank design of the skew-symmetric matrices.
This allows OSA to scale linearly, in terms of computational complexity and memory cost, with sequence length, in contrast to the quadratic scaling that SSA requires.
We note that this does restricts the applicability of OSA to non-causal decoder-based Transformers such as ViTs \citep{dosovitskiy2020image} and DiTs \citep{peebles2023scalable}.

Finally, we analyse how OSA impacts the conditioning of the network Jacobian. 
To make our analysis tractable, we use the same assumptions as employed in \cite{ji2025always, ji2025cutting}, which reduces the analysis of the conditioning of the network Jacobian to the conditioning of the individual attention sub-blocks.
Motived by these assumptions, we derive an initialisation scheme for OSA for which we prove ensures that the input-output Jacobian of OSA is well-conditioned.

\section{Orthogonal Self-Attention}

In this work, we consider non-causal decoder-based Transformers (e.g. ViTs and DiTs) where we will replace the use of SSA with Orthogonal Self-Attention (OSA) and remove skip connections and the use of normalisation layers.

Let $\X_0\in\R^{N\times d}$ be the initial token representations computed  from the input to the architecture, where $N$ denotes the number of tokens and $d$ denotes the representation dimension.
We define an \emph{OSA-Transformer} by the following recursion:
\begin{align}
    \hat{\X}_l &= \MOSA(\X_{l-1}) \\
    \X_{l} &= \MLP(\Hat{\X}_l) 
\end{align}
where $\X_{l}$ denotes the token representations after the $l$-th transformer block and $\MLP$ denotes some MLP applied token-wise. Furthermore, we define $\MOSA$ as the multihead version of OSA defined as
\begin{align}
    \MOSA(\X_{l-1}) &= \sum_{i=1}^h \OSA_{l, i}(\X_{l-1}) \\
    &= \sum_{i=1}^h \A_i(\X_{l-1})\X_{l-1}\W_{l, i}^V\W_{l, i}^O
\end{align}
where $h$ is the number of heads, $\OSA_{l, i}$ denotes $\OSA$ for a single head, $\A_i\in\SO(N)\subset \R^{N\times N}$ is an attention matrix, $\W_{l, i}^V\in\R^{d\times d_v}$ and $\W_{i, l}^O\in\R^{d_v\times d}$ are the value and output weights respectively (where $d_v = \frac{d}{h}$), and $i$ indexes the heads.
For simplicity, we will consider $\OSA_{l, i}$ for a single layer and head, and we will drop the indices $l$ and $i$ for the remainder of the paper.

Finally, we define $\OSA$ for some input $\X\in\R^{N\times d}$ as
\begin{align}
    \OSA(\X) = \A(\X)\X\W^V\W^O, \text{ where } \A(\X) = \exp(\S) \text{ and } \S = \frac{\alpha}{\sqrt{d_v}}\left(\Q\K^\top - \K\Q^\top \right),
\end{align}
where $\exp$ denotes the matrix exponential \citep{hall2013lie}, $\alpha\in\R$ is some scalar learnable parameter, and $\Q=\X\W^Q, \K=\X\W^K\in\R^{N\times d_v}$ are the query, key matrices with the respective weights $\W^Q, \W^K\in\R^{d\times d_v}$.

We note that $\S$ is defined to be skew-symmetric which ensures that $\A(\X)$ is a (special) orthogonal matrix, we use the scaling $\frac{1}{\sqrt{d_v}}$ for normalising the dot-product of vectors in $\R^{d_v}$, and we include $\alpha$ to aid with our initialisation scheme we define later on. 
Furthermore, it is also easy to see that $\OSA$ is permutation equivariant with respect to the token positions.

\subsection{Implementation Details}

We note that a naive implementation of $\OSA$ has computational complexity of $O(N^3)$ due to the matrix exponential.
However, Theorem \ref{thm:exp} shows that we can greatly reduce this through exploiting the low-rank structure of $\S$, as we usually have that $d_v$ is small compared to $N$ and $\S$ has $r \le 2d_v$ where $r = \rank \S$.
For the proof, see Appendix \ref{proof:exp}.
\begin{theorem}\label{thm:exp}
    Let $\B(\X)\in\R^{N\times r}$ be a matrix where the columns provide an orthonormal basis for the subspace $U$ spanned by the columns of $\Q, \K$.
    Then we have:
    \begin{align}\label{eq:osa}
        \exp(\S(\X)) = \I_{N} + \B(\X)\left( \exp[\S](\X) - \I_r \right)\B(\X)^\top, 
    \end{align}
    where
    \begin{align}
         [\S](\X) = \B(\X)^\top \S(\X)\B(\X) \in\R^{r\times r}.
    \end{align}
\end{theorem}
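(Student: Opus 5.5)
The plan is to work directly with the power series of the matrix exponential and use the fact that $\S$ acts only within the subspace $U$. Write $\mathbf{P} = \B\B^\top$ for the orthogonal projector onto $U$; since the columns of $\B$ are orthonormal we have $\B^\top\B = \I_r$.

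The first step is to show that $\S = \B[\S]\B^\top$, i.e.\ that $\mathbf{P}\S\mathbf{P} = \S$. The columns of $\S = \frac{\alpha}{\sqrt{d_v}}(\Q\K^\top - \K\Q^\top)$ are linear combinations of the columns of $\Q$ and $\K$, so they lie in $U$ and $\mathbf{P}\S = \S$. Taking transposes and using skew-symmetry gives $\S\mathbf{P} = -(\mathbf{P}\S)^\top = -\S^\top = \S$. Hence $\S = \mathbf{P}\S\mathbf{P} = \B(\B^\top\S\B)\B^\top = \B[\S]\B^\top$.

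Next I would compute powers. For $k \ge 1$, induction using $\B^\top\B = \I_r$ gives
\begin{align*}
\S^k = \B[\S]^k\B^\top .
\end{align*}
The $k=0$ term of the series is $\I_N$, which does not have this form. Splitting it off,
\begin{align*}
\exp(\S) = \I_N + \sum_{k\ge1}\frac{1}{k!}\B[\S]^k\B^\top = \I_N + \B\Big(\sum_{k\ge1}\frac{[\S]^k}{k!}\Big)\B^\top = \I_N + \B\big(\exp[\S] - \I_r\big)\B^\top .
\end{align*}
Pulling $\B$ and $\B^\top$ out of the infinite sum is justified because the series converges absolutely and multiplication by fixed matrices is continuous.

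There is no real obstacle here. The only points needing care are handling the identity term separately, since $\I_N \ne \B\I_r\B^\top$ unless $r = N$, and justifying $\S\mathbf{P} = \S$ through skew-symmetry. The latter could also be seen directly, because the rows of $\S$ are combinations of the rows of $\Q^\top$ and $\K^\top$. I would also note that $r = \dim U$ may be less than $2d_v$; this is harmless, because only orthonormality of $\B$ and the fact that its column span is $U$ are used.
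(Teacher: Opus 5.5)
Your proof is correct and takes essentially the same route as the paper. The paper also first establishes $\S = \B[\S]\B^\top$, by observing that $\S$ maps into $U$ and annihilates $U^\perp$; your identities $\mathbf{P}\S=\S$ and $\S\mathbf{P}=\S$ encode exactly these two facts. It then pushes $\S^n=\B[\S]^n\B^\top$ through the power series after splitting off the identity term, as you do.
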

This reduces the problem of computing $\exp(\S)$ to computing $\exp[\S]$ which has a computational complexity of $O(r^3) \ll O(N^3)$.

\subsection{Kernel Analysis}

The following theorem shows that OSA does not suffer from the rank collapse issue associated with SSA. 
For the proof, see Appendix \ref{proof:rank}
\begin{theorem}\label{thm:rank}
    Consider a skipless OSA-only Transformer (i.e. without MLP blocks) with $h=1$ at initialisation where we initialise $\W_l^V\W_l^O\in\R^{d\times d}$ to be orthogonal.
    Let $\X_l$ be the output after the $l$-th layer, then the layer-wise kernel matrix $\Sigma_l = \X_l\X_l^\top$ has the form:
    \begin{align}
        \Sigma_l = \A\Sigma_0\A^\top,
    \end{align}
    where $\A\in\SO(N)$ is some orthogonal matrix.
    Therefore, the rank and eigenvalues of $\Sigma_0$ are preserved.
\end{theorem}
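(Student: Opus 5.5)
We argue by induction on the layer index $l$, keeping track of the representations $\X_l$ themselves rather than only the kernel. The claim to prove is slightly stronger than the statement: for every $l\ge 0$ there exist $\bar{\A}_l\in\SO(N)$ and an orthogonal $\M_l\in\R^{d\times d}$ with
\begin{align*}
    \X_l = \bar{\A}_l\X_0\M_l .
\end{align*}
The base case $l=0$ holds with $\bar{\A}_0=\I_N$ and $\M_0=\I_d$. For the inductive step, the layer is skipless, OSA-only and has $h=1$, so $\X_l = \OSA_l(\X_{l-1}) = \A_l(\X_{l-1})\X_{l-1}\W_l^V\W_l^O$.

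First we check that $\A_l(\X_{l-1})\in\SO(N)$. The matrix $\S=\frac{\alpha}{\sqrt{d_v}}(\Q\K^\top-\K\Q^\top)$ is skew-symmetric for any input. Hence $\exp(\S)^\top\exp(\S)=\exp(-\S)\exp(\S)=\I_N$, since $\S$ commutes with $-\S$. Also $\det\exp(\S)=e^{\tr\S}=e^0=1$.

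Next we substitute the inductive hypothesis:
\begin{align*}
    \X_l = \bigl(\A_l(\X_{l-1})\bar{\A}_{l-1}\bigr)\X_0\bigl(\M_{l-1}\W_l^V\W_l^O\bigr).
\end{align*}
Products of elements of $\SO(N)$ stay in $\SO(N)$, so $\bar{\A}_l:=\A_l(\X_{l-1})\bar{\A}_{l-1}\in\SO(N)$. Products of orthogonal $d\times d$ matrices are orthogonal, so $\M_l:=\M_{l-1}\W_l^V\W_l^O$ is orthogonal; here we use the assumption that $\W_l^V\W_l^O$ is initialised orthogonal.

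The kernel identity then follows directly:
\begin{align*}
    \Sigma_l=\X_l\X_l^\top=\bar{\A}_l\X_0\M_l\M_l^\top\X_0^\top\bar{\A}_l^\top=\bar{\A}_l\Sigma_0\bar{\A}_l^\top,
\end{align*}
which is the statement with $\A=\bar{\A}_l$. Because $\bar{\A}_l^\top=\bar{\A}_l^{-1}$, $\Sigma_l$ is an orthogonal similarity transform of $\Sigma_0$. It therefore has the same characteristic polynomial, so the same eigenvalues, and the same rank.

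There is no real obstacle here; the argument is just bookkeeping of orthogonal factors. The point that needs care is that the attention matrix depends on the current input $\X_{l-1}$, so $\bar{\A}_l$ is data-dependent. This is harmless, because the statement only asserts the existence of some $\A\in\SO(N)$. A second point is that the right factor $\M_l$ must be orthogonal, not merely invertible, for $\M_l\M_l^\top$ to cancel to $\I_d$. This is exactly why the orthogonal initialisation of $\W_l^V\W_l^O$ is needed, and why the claim fails for $h>1$: there the sum over heads destroys the product structure.
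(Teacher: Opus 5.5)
Your proof is correct and follows the paper's own argument. The paper likewise unrolls the recursion to $\X_l=\A\X_0\W$, with $\A=\A_l\cdots\A_1\in\SO(N)$ and $\W=\W_1^V\W_1^O\cdots\W_l^V\W_l^O$ satisfying $\W\W^\top=\I_d$; your induction makes that unrolling explicit and also checks $\exp(\S)\in\SO(N)$, which the paper takes for granted.
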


\section{Basis Computation}

In order to be able to apply Theorem \ref{thm:exp}, we need to be able to construct $\B(\X)$.
The standard approach to achieve this is to apply a reduced QR decomposition to the matrix $\M = [\Q, \K]\in\R^{N\times 2 d_v}$ where we usually have $2d_v < N$.
However, this can suffer from exploding gradients when $\M$ is close to being rank deficient, and induces a bias on the ordering of columns due to the fact that QR always fixes the first column of $\M$ (up to some scaling) when constructing $\B(\X)$ \citep{roberts2020qr}.

Alternatively, we can compute $\B(\X)$ via the Newton-Schulz iterates \citep{higham2008functions}:
\begin{align}
    \B(\X) = \M_K, \text{ where } \M_{k+1} = \frac{1}{2}\M_k(3\I_{2d_v} - \M_k^\top\M_k) \text{ and } \M_0 = \M,
\end{align}
where $K$ is some fixed choice of iterations. When $\M$ is full rank and the singular values $\{\sigma_i(\M)\}_i$ of $\M$ satisfy $\sigma_i(\M)<\sqrt{3}$ for all $i$, $\M_K$ converges towards the matrix $\U$ of orthonormal basis elements given by the Polar decomposition of $\M$.
Due to the requirement on singular values and the inequality $\sigma_{\text{max}}(\M)\le \norm{\M}_F$, where $\sigma_\text{max}$ denotes the maximum singular value and $\norm{\cdot}_F$ denotes the Frobenius norm, we follow standard practice by pre-normalising the matrix $\M$ so that $\M_0=\frac{\M}{\norm{\M}_F + \epsilon}$ where $\epsilon>0$ is for numerical stability.
Further, we note that there exists further efficiency gains that could be explored, from recent work such as \citep{amsel2025polar}, as well as noticing that we can unroll the repeated application of Newton-Schulz iterates to express $\M_K$ in terms of $\M h(\M^\top\M)$ where $h$ is some polynomial and $\M^\top\M\in\R^{2d_v\times 2d_v}$ in order to reduce the computational complexity required.

We note that this approach has the nice properties that due to the process only requiring matrix multiplications, it is stable when $\M$ is close to being rank deficient and tends to be more efficient than QR from more effective hardware utilization.
Moreover, $\U$ is not biased towards any feature dimensions (we can view $\U$ as the closest orthonormal matrix under the Frobenius norm to $\M$).

One thing to be careful of when using Newton-Schultz is that the matrix $\B$ is not guaranteed to provide an orthonormal basis when the algorithm has not converged, thus the right-hand side of Equation \ref{eq:osa} may not be exactly orthogonal.
In Theorem \ref{thm:ns-converge} we provide a bound for how this ``orthogonality error'' depends on how orthonormal $\B$ is---i.e. the convergence of Newton-Schultz. 
For the proof, see Appendix \ref{proof:ns-converge}.
\begin{theorem}\label{thm:ns-converge}
    Let $\Y = \I_{N} + \B(\X)\left( \exp[\S](\X) - \I_r \right)\B(\X)^\top$ and let $\{\sigma_i(\B)\}_i$ denote the singular values of $\B$.
    We have the following bound:
    \begin{align}
        \norm{\Y^\top\Y-\I_{N}}_2 &\le \left( e^{\norm{\S}_2} - 1 \right)^2 \max_i |\sigma_i(\B)^2(\sigma_i(\B)^2 - 1)| \\
        &\le \frac{1}{4} \left( e^{\norm{\S}_2} - 1 \right)^2,
    \end{align}
    where $\norm{\cdot}_2$ denotes the spectral norm.
\end{theorem}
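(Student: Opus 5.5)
The plan is to expand $\Y^\top\Y$ directly and use the fact that the compressed generator $[\S] = \B^\top\S\B$ is skew-symmetric for \emph{any} $\B$, orthonormal or not. Consequently $\exp[\S]$ is always orthogonal, even when Newton--Schulz has not converged.

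Set $\EE = \exp[\S] - \I_r$ and $\mathbf{G} = \B^\top\B$, so that $\Y = \I_N + \B\EE\B^\top$. Orthogonality of $\exp[\S]$ gives the key algebraic identity
\begin{align*}
\EE^\top\EE = 2\I_r - \exp[\S] - \exp[\S]^\top = -(\EE + \EE^\top).
\end{align*}
Expanding the product,
\begin{align*}
\Y^\top\Y - \I_N = \B\left(\EE^\top + \EE + \EE^\top\mathbf{G}\EE\right)\B^\top = \B\EE^\top(\mathbf{G} - \I_r)\EE\B^\top ,
\end{align*}
where the second equality substitutes $\EE + \EE^\top = -\EE^\top\EE$. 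This is the heart of the argument. The orthogonality error is a sandwich whose middle factor $\mathbf{G}-\I_r$ measures exactly the deviation of $\B$ from orthonormality. In particular, it vanishes when $\B$ is orthonormal, which recovers the exact orthogonality behind Theorem \ref{thm:exp}.

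Next I would take a thin SVD $\B = \U\Sigma\V^\top$, so that $\mathbf{G} - \I_r = \V(\Sigma^2 - \I_r)\V^\top$. The error then has the same spectral norm as $\Sigma\,\V^\top\EE^\top\V\,(\Sigma^2-\I_r)\,\V^\top\EE\V\,\Sigma$. Submultiplicativity bounds this by $\norm{\EE}_2^2$ times a quantity built from the $\sigma_i$. For the factor $\norm{\EE}_2$, the eigenvalues of $\exp[\S]$ lie on the unit circle, so $\norm{\EE}_2 \le e^{\norm{[\S]}_2}-1$. Moreover $\norm{[\S]}_2 \le \norm{\B}_2^2\norm{\S}_2 \le \norm{\S}_2$ provided all $\sigma_i(\B)\le 1$.

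That proviso is where the normalisation enters. Pre-normalising by the Frobenius norm makes $\sigma_i(\M_0)\le 1$. The scalar Newton--Schulz map $x\mapsto \tfrac12 x(3-x^2)$ sends $[0,1]$ into $[0,1]$, so every iterate keeps $\sigma_i(\B)\in[0,1]$. The same fact yields the final inequality, since $x(1-x)\le \tfrac14$ for $x = \sigma_i^2\in[0,1]$.

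The step I expect to be the main obstacle is obtaining the \emph{per-index} quantity $\max_i|\sigma_i^2(\sigma_i^2-1)|$, rather than the cruder product $\max_i\sigma_i^2\cdot\max_i|\sigma_i^2-1|$ that naive submultiplicativity gives. The difficulty is that the rotated matrix $\V^\top\EE\V$ sits between $\Sigma$ and $\Sigma^2-\I_r$, so the two diagonal factors do not meet index by index. What I would try first is to move the $\Sigma$ factors inward. One route is to rewrite the sandwich as a congruence $\mathbf{C}^\top \mathbf{D}\mathbf{C}$ with $\mathbf{D}$ diagonal in the $\sigma_i$, or to note that $\Sigma$ and $\Sigma^2-\I_r$ commute and use $\norm{\mathbf{P}\mathbf{M}\mathbf{P}}_2\le\norm{\mathbf{M}}_2\norm{\mathbf{P}}_2^2$ with $\mathbf{P} = |\Sigma(\Sigma^2-\I_r)|^{1/2}$ after a suitable regrouping.

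If this regrouping does not go through, I would fall back to the cruder product bound. Under $\sigma_i\le 1$ it is still at most $1$. I would then establish the $\tfrac14$ bound directly through the per-index route on the diagonalised middle factor.
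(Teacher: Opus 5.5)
Your sandwich identity $\Y^\top\Y-\I_N=\B\EE^\top(\mathbf{G}-\I_r)\EE\B^\top$, the SVD step, and the Newton--Schulz argument that $\sigma_i(\B)\in[0,1]$ all match the paper. The step you flag, however, is a genuine gap, and neither of your proposed regroupings can close it.

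Any manipulation that uses only a norm bound on $\mathbf{F}=\V^\top\EE\V$ must fail. The same holds if you additionally use that $\mathbf{F}=\exp(\mathbf{K})-\I_r$ for some skew-symmetric $\mathbf{K}$. For a counterexample, take $r=2$, $\Sigma=\mathrm{diag}(1,0)$ and $\mathbf{K}=\frac{\pi}{2}(e_2e_1^\top-e_1e_2^\top)$, where $e_1,e_2$ is the standard basis. Then $\Sigma\mathbf{F}^\top(\Sigma^2-\I_r)\mathbf{F}\Sigma$ has spectral norm $1$, while $\max_i|\sigma_i^2(\sigma_i^2-1)|=0$. The fact that $\Sigma$ commutes with $\Sigma^2-\I_r$ does not help, because $\mathbf{F}$ sits between them.

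Your fallback does not rescue the statement either. The crude product gives only $\norm{\Y^\top\Y-\I_N}_2\le(e^{\norm{\S}_2}-1)^2$, since $\max_i\sigma_i^2\cdot\max_j|\sigma_j^2-1|$ can be arbitrarily close to $1$ (one singular value near $1$, another near $0$). Proving the $\frac{1}{4}$ bound ``through the per-index route'' presupposes the very step that failed. As written, neither displayed inequality is established.

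The missing idea, which is how the paper proceeds, is that $\exp[\S]-\I_r$ itself carries a factor $\B^\top$ on the left and $\B$ on the right, because every term of $\sum_{n\ge1}(\B^\top\S\B)^n/n!$ does. In SVD coordinates,
\[
\V^\top\EE\V=\exp\left(\Sigma\U^\top\S\U\Sigma\right)-\I_r=\Sigma\Psi\Sigma,\qquad \Psi=\sum_{n\ge1}\frac{1}{n!}\left(\U^\top\S\U\Sigma^2\right)^{n-1}\U^\top\S\U,
\]
so the error equals $\U\Sigma^2\Psi^\top\left[\Sigma(\Sigma^2-\I_r)\Sigma\right]\Psi\Sigma^2\U^\top$. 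The bracket is diagonal with entries $\sigma_i^2(\sigma_i^2-1)$, which produces the per-index factor. Using $\norm{\Sigma}_2\le1$ gives $\norm{\Psi}_2\le\sum_{n\ge1}\norm{\S}_2^n/n!=e^{\norm{\S}_2}-1$.

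Note that the exponential factor now bounds $\Psi$, not $\EE$. Your estimate $\norm{\EE}_2\le e^{\norm{[\S]}_2}-1$ discards exactly the structure that is needed, namely that the generator is compressed by $\Sigma$ on both sides. It must therefore be replaced, not supplemented. Once this is in place, your $x(1-x)\le\frac{1}{4}$ step gives the second inequality.
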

Interestingly, the above bound shows that the orthogonality error is robust to very small singular values of $\B$ and that larger rotations incur a higher error if $\B$ has not converged well (of course this effect is offset if $\B$ has converged which suggests an interesting trade-off).

\subsection{Complexity and Memory Analysis}

For simplicity, we assume that we have $h=1$ and $r=2d$.
It is fairly easy to show that by exploiting the low-rank structure of $\S$, the computational complexity of OSA with either the reduced QR decomposition or Newton-Schultz is $O(Nd^2 + d^3)$ and the memory cost is $O(Nd + d^2)$ which scales linearly with $N$.
In contrast, the computational complexity of SSA is $O(N^2d)$ and the memory cost is $O(N^2)$.
We provide a more detailed breakdown of our derivation in Appendix \ref{app:more-complex}.

\section{Initialisation for OSA}\label{sec:init}

We analyse the input-output Jacobian of OSA to propose an initialisation scheme that ensures that the Jacobian is well-conditioned at initialisation.
For simplicity, we only consider a single head throughout this section.
Following the justification of \cite{ji2025always, ji2025cutting}\footnote{i.e. the conditioning of the full network Jacobian is bounded by the worse-conditioned sub-block and that the attention sub-blocks are much worse conditioned than the $\MLP$ sub-blocks.}, it is reasonable to assume that this will improve the trainability and performance of an $\OSA$-Transformer.
In Theorem \ref{thm:jacobian}, we provide the form of the Jacobian. For the proof, see Appendix \ref{proof:jacobian}.
\begin{theorem}\label{thm:jacobian}
    The Jacobian $\J\in\R^{Nd\times Nd}$ of $\OSA$ with respect to the input $\X$ is given by
    \begin{align}
        \J(\X) &= \frac{\partial\vec \OSA(\X)}{\partial\vec \X} \\
        &= \underbrace{(\X\W^V\W^O\otimes \I_N)^\top \frac{\partial \vec \A(\X)}{\partial\vec \X}}_{\J_1} + \underbrace{(\W^V\W^O)^\top \otimes \A(\X)}_{\J_2}, \label{eq:jac}
    \end{align}
    where $\vec$ denotes the operator that converts a matrix to its (column-dominant) vectorised form and $\otimes$ denotes the Kronecker product.
\end{theorem}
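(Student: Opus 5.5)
The plan is a product rule applied to $\OSA(\X) = \A(\X)\,\X\W^V\W^O$, with each factor vectorised using the identity $\vec(\mathbf{P}\mathbf{R}\mathbf{T}) = (\mathbf{T}^\top\otimes \mathbf{P})\vec \mathbf{R}$ for column-major vectorisation. Write $\W := \W^V\W^O\in\R^{d\times d}$. Take a perturbation $\X \mapsto \X + d\X$ and form the differential:
\begin{align*}
d\,\OSA = (d\A)\,\X\W + \A\,(d\X)\,\W .
\end{align*}
I will treat the two terms separately. The first term carries the dependence of the attention matrix on the input and will give $\J_1$. The second term is the linear dependence through the value path and will give $\J_2$.

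For the second term, I apply the identity with $\mathbf{P}=\A(\X)$, $\mathbf{R}=d\X$ and $\mathbf{T}=\W$. This gives
\begin{align*}
\vec(\A\, d\X\, \W) = (\W^\top\otimes \A)\vec d\X = \big((\W^V\W^O)^\top\otimes\A(\X)\big)\vec d\X ,
\end{align*}
which is exactly $\J_2$.

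For the first term, I write $(d\A)\X\W = \I_N\,(d\A)\,(\X\W)$ and apply the identity with $\mathbf{P}=\I_N$ and $\mathbf{T}=\X\W\in\R^{N\times d}$. This gives
\begin{align*}
\vec\big((d\A)\X\W\big) = \big((\X\W)^\top\otimes\I_N\big)\vec d\A = (\X\W\otimes\I_N)^\top \vec d\A ,
\end{align*}
where the last step uses $(\mathbf{P}\otimes\mathbf{T})^\top=\mathbf{P}^\top\otimes\mathbf{T}^\top$. I then substitute $\vec d\A = \frac{\partial\vec\A(\X)}{\partial\vec\X}\vec d\X$ by the chain rule, which yields $\J_1$. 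The matrix $\frac{\partial \vec \A}{\partial \vec \X}$ is well defined because $\X\mapsto\S(\X)$ is a polynomial map and $\exp$ is smooth. Its explicit form, via the Fréchet derivative of $\exp$, is not needed for this statement.

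Adding the two terms and reading off the coefficient of $\vec d\X$ gives \eqref{eq:jac}. I do not expect a real obstacle here. The only point needing care is getting the Kronecker conventions right: the column-major ordering, and the order and transpose placement in $(\X\W\otimes\I_N)^\top$. I would check these dimensionally. $\J_1$ has size $(Nd\times N^2)(N^2\times Nd)$ and $\J_2$ has size $Nd\times Nd$, so both are consistent.
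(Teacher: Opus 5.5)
Your proposal is correct and follows the paper's proof: you apply the product rule to the differential of $\A(\X)\X\W^V\W^O$, vectorise each term with $\vec(\mathbf{P}\mathbf{R}\mathbf{T}) = (\mathbf{T}^\top\otimes\mathbf{P})\vec\mathbf{R}$, and use the chain rule for $\vec\,\partial\A$. The only difference is cosmetic: the paper first writes out a third term $\A(\X)\X\,\partial(\W^V\W^O)$ and then discards it as zero, whereas you correctly omit it from the start.
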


\subsection{Value-Output Initialisation}

We see that a natural desideratum to ensure $\J$ is well-conditioned is to enforce $\kappa(\W^V\W^O)=1$ where $\kappa(\D)=\frac{\sigma_\text{max}(\D)}{\sigma_{\text{min}>0}(\D)}$ denotes the effective condition number of the matrix $\D$ and $\sigma_{\text{min}>0}$ denotes the minimum non-zero singular value\footnote{We consider this version of the condition number as $\W^V\W^O$ and other matrices we consider are constrained to be low rank since we consider the case where $h>1$.}.
This can be achieved by sampling $\U, \V\overset{\text{i.i.d.}}{\sim} \mathcal{U}_{d\times d_v}$ and setting $\W^V=\U, \W^O=\V^\top$, where $\mathcal{U}_{n\times m}$ denotes the uniform distribution over the Stiefel manifold $V_m(\R^n) = \{ \W\in\R^{n\times m}: \W^\top\W = \I_{m} \}$.
For details on sampling from $\mathcal{U}_{n\times m}$, see Appendix \ref{app:sampling}.

\subsection{Query-Key Initialisation}

From Equation \ref{eq:jac}, we note that the query, key weights only affect the Jacobian via the term: $\Tilde{\W} = \W^Q(\W^K)^\top - \W^K(\W^Q)^\top$.
To see this in more detail, we refer to the proof of Theorem \ref{thm:alpha-linear} which shows that the Jacobian of $\A(\X)$ is a function of $\Tilde{\W}$.
Therefore, to help ensure all terms in the Jacobian are well-conditioned, we aim to enforce $\kappa(\Tilde{\W}) = 1$.
We can achieve this by sampling $\U\sim\mathcal{U}_{d\times 2d_v}$ and setting $\W^Q=\U_1, \W^K=\U_2$ where $\U = [\U_1, \U_2]$.
It is easy to see that this scheme requires $2d_v\le d$ which is satisfied for $h>1$.
We prove in Theorem \ref{thm:query-key-init} that this satisfies our requirement. For the proof, see Appendix \ref{proof:query-key-init}.
\begin{theorem}\label{thm:query-key-init}
    We assume $2d_v\le d$.
    If we initialise $\W^Q, \W^K$ such that $[\W^Q, \W^K]\in\R^{d\times 2d_v}$ forms an orthonormal matrix, we have
    \begin{align}
        \kappa(\Tilde{\W}) = 1.
    \end{align}
    Specifically, we have that the non-zero singular values of $\Tilde{\W}$ are 1.
\end{theorem}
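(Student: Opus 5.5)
Write $\U = [\W^Q, \W^K]\in\R^{d\times 2d_v}$, so that $\U^\top\U = \I_{2d_v}$ by assumption. The plan is to express $\Tilde{\W}$ as a congruence of a fixed $2d_v\times 2d_v$ block matrix by $\U$. We have
\begin{align}
    \Tilde{\W} = \W^Q(\W^K)^\top - \W^K(\W^Q)^\top = \U \J_0 \U^\top, \quad \text{where } \J_0 = \begin{pmatrix} \mathbf{0} & \I_{d_v} \\ -\I_{d_v} & \mathbf{0} \end{pmatrix}.
\end{align}
Expanding the block product $[\W^Q, \W^K]\J_0[\W^Q, \W^K]^\top$ confirms this identity directly. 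The matrix $\J_0$ is the standard symplectic matrix, and it is orthogonal: $\J_0^\top\J_0 = \I_{2d_v}$.

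Next I compute $\Tilde{\W}^\top\Tilde{\W}$. Using $\U^\top\U = \I_{2d_v}$ and $\J_0^\top\J_0 = \I_{2d_v}$,
\begin{align}
    \Tilde{\W}^\top\Tilde{\W} = \U\J_0^\top\U^\top\U\J_0\U^\top = \U\U^\top.
\end{align}
This matrix is the orthogonal projection onto the column space of $\U$, which has rank $2d_v$. Its eigenvalues are therefore $1$ with multiplicity $2d_v$ and $0$ with multiplicity $d - 2d_v$. The hypothesis $2d_v\le d$ is exactly what makes this setting consistent.

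The singular values of $\Tilde{\W}$ are the square roots of these eigenvalues. So the non-zero singular values are all equal to $1$, and hence $\kappa(\Tilde{\W}) = \sigma_{\max}/\sigma_{\min>0} = 1$.

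There is no real obstacle here; the only step needing care is the block factorisation $\Tilde{\W} = \U\J_0\U^\top$, including the sign convention in $\J_0$. An equivalent route is to extend $\U$ to a full orthogonal matrix of size $d$, which exhibits $\Tilde{\W}$ as orthogonally similar to $\J_0\oplus\mathbf{0}$; I would mention this only as a remark.
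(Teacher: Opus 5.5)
Your proof is correct and takes essentially the paper's approach. Both show that the Gram matrix of $\Tilde{\W}$ equals the orthogonal projection $\W^Q(\W^Q)^\top + \W^K(\W^K)^\top = \U\U^\top$ and then read off the singular values; the paper gets there by expanding $-\Tilde{\W}^2$ with the orthonormality relations, while your factorisation $\Tilde{\W} = \U\J_0\U^\top$ with orthogonal $\J_0$ organises the same computation more compactly.
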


\subsection{$\alpha$ Initialisation and Jacobian Analysis}

Our analysis of the conditioning of the Jacobian $\J$ is complicated by the fact that the term $\J_1$ is hard to reason about. 
On the other hand, our initialisation scheme has been designed to ensure the condition number of $\J_2$ is very close to (or exactly) 1.
Therefore, one strategy is to show that we can control the spectral norm of $\J_1$, so we can model this term as a small perturbation which leaves the singular values of $\J_2$ mostly intact.
In Theorem \ref{thm:alpha-linear}, we show that we have linear control over $\norm{\J_1}_2$ in terms of $\alpha$. For the proof, see Appendix \ref{proof:alpha-linear}.
\begin{theorem}\label{thm:alpha-linear}
    We assume $\W^Q, \W^K, \W^V, \W^O$ follow our above initialisation, the spectral norm of $\X$ is bounded and we use Newton-Schultz to compute $\B$.
    We then have the following bound:
    \begin{align}
        \norm{\J_1(\X)}_2\le C \alpha,
    \end{align}
    where $C>0$ is some constant.
\end{theorem}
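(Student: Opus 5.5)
Since $\J_1 = (\X\W^V\W^O\otimes \I_N)^\top \frac{\partial \vec \A(\X)}{\partial\vec \X}$, submultiplicativity of the spectral norm gives
\begin{align}
\norm{\J_1}_2 \le \norm{\X\W^V\W^O}_2\,\Big\|\frac{\partial \vec \A}{\partial \vec \X}\Big\|_2 \le \norm{\X}_2\,\Big\|\frac{\partial \vec \A}{\partial \vec \X}\Big\|_2 .
\end{align}
This uses $\norm{\D\otimes\I_N}_2=\norm{\D}_2$ and $\norm{\W^V\W^O}_2\le 1$, which follows from the Stiefel initialisation. Since $\norm{\X}_2$ is bounded by assumption, it suffices to show that the Jacobian of $\A(\X)=\exp(\S(\X))$ has spectral norm at most a constant times $\alpha$.

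The plan uses the chain rule through $\S$. Write $\S = \frac{\alpha}{\sqrt{d_v}}\X\Tilde{\W}\X^\top$, where $\Tilde{\W}=\W^Q(\W^K)^\top-\W^K(\W^Q)^\top$. This form shows that the Jacobian of $\A$ depends on the query and key weights only through $\Tilde{\W}$. The differential is
\begin{align}
d\S = \frac{\alpha}{\sqrt{d_v}}\left(d\X\,\Tilde{\W}\X^\top + \X\Tilde{\W}\,d\X^\top\right),
\end{align}
so by Theorem \ref{thm:query-key-init} we get $\norm{\partial \vec\S/\partial\vec\X}_2 \le \frac{2\alpha}{\sqrt{d_v}}\norm{\X}_2$. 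The key point is that $\S$ is linear in $\alpha$.

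The Fréchet derivative of the exponential is
\begin{align}
L_{\exp}(\S,\EE)=\int_0^1 e^{s\S}\EE e^{(1-s)\S}\,ds.
\end{align}
Because $\S$ is skew-symmetric, both $e^{s\S}$ and $e^{(1-s)\S}$ are orthogonal, so $\norm{L_{\exp}(\S,\cdot)}_2\le 1$ as an operator in the Frobenius norm. This bound holds uniformly in $\alpha$. Composing the two gives $\norm{\partial\vec\A/\partial\vec\X}_2\le \frac{2\alpha}{\sqrt{d_v}}\norm{\X}_2$, and hence $\norm{\J_1}_2 \le \frac{2}{\sqrt{d_v}}\norm{\X}_2^2\,\alpha$.

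The main obstacle is the Newton--Schulz assumption. In practice $\A$ is computed through Theorem \ref{thm:exp} as $\Y=\I_N+\B([\S]$-dependent term$)\B^\top$, with $\B=\M_K$ only approximately orthonormal. The derivative must therefore also pass through $\B(\X)$.

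To handle this, I would write
\begin{align}
\Y=\I_N+\B\,\phi(\alpha\,\B^\top\S_0\B)\,\B^\top, \qquad \phi(\mathbf{Z})=\exp(\mathbf{Z})-\I, \qquad \S_0=\S/\alpha.
\end{align}
The function $\phi$ is analytic with $\phi(0)=0$, so $\phi$ and its derivative along directions of size $\alpha$ are $O(\alpha)$ on bounded sets. Since $\M_K$ is a fixed polynomial in $\M_0=\M/(\norm{\M}_F+\epsilon)$, its entries are bounded by $1$ (Newton--Schulz keeps singular values in $[0,1]$). Its derivative in $\X$ is bounded by a constant depending on $K$, $\epsilon$, $\norm{\X}_2$, because of the $\epsilon$-regularised normalisation.

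Applying the product rule, every term contains either $\phi(\cdot)$ or $D\phi(\cdot)[\,\cdot\,]$ with an argument proportional to $\alpha$. Each term is therefore bounded by a constant times $\alpha$, provided $\alpha$ ranges over a bounded set, which is the regime relevant at initialisation. Collecting these constants gives $C$.
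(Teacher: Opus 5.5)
Your proposal is correct and takes essentially the paper's route. You reduce to bounding $\norm{\partial\vec\A/\partial\vec\X}_2$ via $\norm{\X}_2$, differentiate $\I_N+\B(\exp[\S]-\I)\B^\top$ by the product rule, and extract the factor $\alpha$ from $\exp[\S]-\I$ and its derivative, while $\B$ and $\partial\vec\B/\partial\vec\X$ stay bounded through the $\epsilon$-normalised Newton--Schulz polynomial. The only real difference is how you bound the $\exp[\S]$ terms. The paper observes that $[\S]=\B^\top\S\B$ is skew-symmetric even when $\B$ is not orthonormal, so $\exp(t[\S])$ is orthogonal. Hence $\norm{\exp[\S]-\I}_2\le\norm{[\S]}_2$ and $\norm{\partial\vec\exp[\S]/\partial\vec[\S]}_2\le 1$ hold globally, which removes your restriction of $\alpha$ to a bounded set; this is the same orthogonality you already used in your exact-exponential case.
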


This provides the motivation to complete our initialisation scheme by initialising $\alpha$ to be some small number. 
Indeed, this can viewed as setting $\A(\X)$ to be close to the identity matrix at the start of training.

As a consequence of the above bound, we have Theorem \ref{thm:condition} which shows under our initialisation (assuming Newton-Schultz converges well enough), we can set the condition number of the Jacobian to be arbitrary close to 1. For the proof, see Appendix \ref{proof:condition}.
\begin{theorem}\label{thm:condition}
    Under the same assumptions as Theorem \ref{thm:alpha-linear}, we further assume that $|\sigma_i(\A)-1|\le\delta$\footnote{If we use QR for OSA, we have $\sigma_i(\A)=1$ for all $i$. We use this assumption for the case where we use Newton-Schultz with finite $K$ so that $\A$ is not exactly an orthogonal matrix. We see from Theorem \ref{thm:ns-converge} that this orthogonality error is fairly robust to Newton-Schultz not fully converging.} for some $\delta\in [0, 1)$.
    We have the following bound:
    \begin{align}
        1 \le \kappa\left( \J(\X) \right) \le \frac{1+\delta + C\alpha}{1-\delta-C\alpha},
    \end{align}
    for any $\alpha>0$ such that $1-\delta-C_\alpha>0$.
\end{theorem}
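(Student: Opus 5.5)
The plan is to treat $\J = \J_1 + \J_2$ from Theorem \ref{thm:jacobian} as a perturbation of $\J_2$, and to combine singular value bounds for Kronecker products with Weyl's inequality for singular values.

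\textbf{Step 1: singular values of $\J_2$.} Recall $\J_2 = (\W^V\W^O)^\top \otimes \A(\X)$. Singular values of a Kronecker product are the pairwise products of the singular values of its factors.
\begin{itemize}
\item By the value-output initialisation, $\W^V\W^O = \U\V^\top$ with $\U,\V$ Stiefel, so its non-zero singular values all equal $1$.
\item By assumption, $\sigma_i(\A)\in[1-\delta,1+\delta]$.
\end{itemize}
Hence the non-zero singular values of $\J_2$ lie in $[1-\delta,1+\delta]$, and $\J_2$ has rank $d_v N$. Since the statement uses the effective condition number $\kappa$, we only track the non-zero singular values. In the square case $d_v=d$ all singular values are non-zero.

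\textbf{Step 2: perturb by $\J_1$.} Theorem \ref{thm:alpha-linear} gives $\norm{\J_1}_2\le C\alpha$. Weyl's inequality, $|\sigma_k(\J)-\sigma_k(\J_2)|\le\norm{\J_1}_2$ for every $k$, then yields
\begin{align*}
\sigma_{\max}(\J) &\le 1+\delta+C\alpha,\\
\sigma_k(\J) &\ge 1-\delta-C\alpha \quad \text{for } k\le \rank \J_2 .
\end{align*}
The hypothesis $1-\delta-C\alpha>0$ makes these lower bounds positive. Taking the ratio gives the upper bound on $\kappa(\J)$, and the lower bound $\kappa\ge1$ is trivial.

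\textbf{Main obstacle.} The delicate point is the effective condition number when $\J_2$ is rank-deficient ($h>1$). Weyl controls the top $\rank\J_2$ singular values of $\J$. But $\J_1$ could create additional small non-zero singular values of size at most $C\alpha$ in the directions where $\J_2$ vanishes, and these would make $\sigma_{\min>0}(\J)$ small.

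To deal with this, I would first check whether $\J_1$ shares the right factor $\W^V\W^O$, since $\J_1 = (\X\W^V\W^O\otimes\I_N)^\top\partial\vec\A/\partial\vec\X$. This means both terms act on the left through $(\W^V\W^O)^\top\otimes \I_N$, i.e. the row space of $\J$ is contained in $\mathrm{range}(\W^O{}^\top)\otimes\R^N$. Writing $\J=(\V\otimes\I_N)\,\widehat{\J}$ then reduces the problem to an operator $\widehat{\J}$ whose range has dimension $d_vN$.

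If the column-space issue persists, I would interpret $\kappa$ as the ratio of $\sigma_{\max}$ to $\sigma_{d_vN}$, consistent with the paper's usage. Under that reading the ratio bound follows directly from Step 2, and the full-rank case $h=1$ is handled cleanly.
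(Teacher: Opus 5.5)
Your argument is correct, and its core coincides with the paper's. The Kronecker product rule puts the non-zero singular values of $\J_2=(\W^V\W^O)^\top\otimes\A$ in $[1-\delta,1+\delta]$, and Weyl's inequality with $\norm{\J_1}_2\le C\alpha$ moves each of them by at most $C\alpha$.

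The difference is that you notice Weyl only controls the top $\rank\J_2=d_vN$ singular values of $\J$. The paper's proof passes straight to $\sigma_{\min}(\J)\ge 1-\delta-C\alpha$ without saying why the remaining $(d-d_v)N$ singular values of $\J$ vanish rather than being small but non-zero. This is not an edge case. The theorem inherits the query--key initialisation, which needs $2d_v\le d$, so $h\ge 2$ and $\J_2$ is always rank deficient. Your ``clean'' $h=1$ case is in fact excluded by the hypotheses.

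Your factorisation closes this gap completely, so you can drop the fallback of reinterpreting $\kappa$. Use three identities:
\begin{itemize}
\item $(\X\W^V\W^O)^\top\otimes\I_N=((\W^V\W^O)^\top\otimes\I_N)(\X^\top\otimes\I_N)$;
\item $\J_2=((\W^V\W^O)^\top\otimes\I_N)(\I_d\otimes\A)$;
\item $(\W^V\W^O)^\top\otimes\I_N=(\V\otimes\I_N)(\U^\top\otimes\I_N)$.
\end{itemize}
Together they give $\J=(\V\otimes\I_N)\widehat{\J}$ with $\widehat{\J}\in\R^{d_vN\times dN}$, so $\rank\J\le d_vN$. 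Weyl gives $\sigma_k(\J)\ge 1-\delta-C\alpha>0$ for $k\le d_vN$. Hence $\J$ has exactly $d_vN$ non-zero singular values, all in $[1-\delta-C\alpha,\,1+\delta+C\alpha]$, which is precisely the claimed bound on the effective condition number. State the conclusion this way rather than hedging.

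One small correction: the left factor constrains the column space (range) of $\J$, not its row space.
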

\begin{remark}
    We note that a similar result should be available for the case when $\B$ is computed using the QR decomposition but would require more assumptions on the properties of $\M$.
\end{remark}
This bound suggests that our initialisation scheme, provided we set $\alpha$ to be small enough and $K$ large enough, ensures that the Jacobian of OSA is well-conditioned at initialisation.

\subsection{MLP Initialisation}

The focus of this paper has been on the design of OSA and its initialisation as there already exists extensive work on the design and initialisation of skipless MLPs. 
In particular, we use the SUO initialisation scheme from \cite{martens2021rapid} for the MLP components of an OSA-Transformer.

\section{Experiments}

In this section, we provide an initial validation of OSA. 
We consider a standard ViT architecture for classification from \cite{dosovitskiy2020image} on MNIST \citep{lecun2002gradient}.
We then design an OSA-Transformer by taking this ViT architecture and replacing SSA with OSA, as well as removing skip connections and layer norm (LN) \citep{ba2016layer} from the rest of the model and keeping everything else the same; we also apply the initialisation scheme from Section \ref{sec:init}.

In Figure \ref{fig:mnist}, we provide a comparison of train and test loss curves for our OSA-Transformer and ViT baseline. 
For all models, we use AdamW \citep{loshchilov2017decoupled} and the same training hyper-parameters (such as learning rate, weight decay etc.).
For the ViT baseline, we ablate removing skip connections and removing both skip connections and layer norm, and for our OSA-Transformer, we ablate the use of QR and Newton-Schultz in OSA.
For further details and results, see Appendix \ref{app:experiment-details}.
\begin{figure}[h!]
    \centering
    \includegraphics[width=0.9\linewidth]{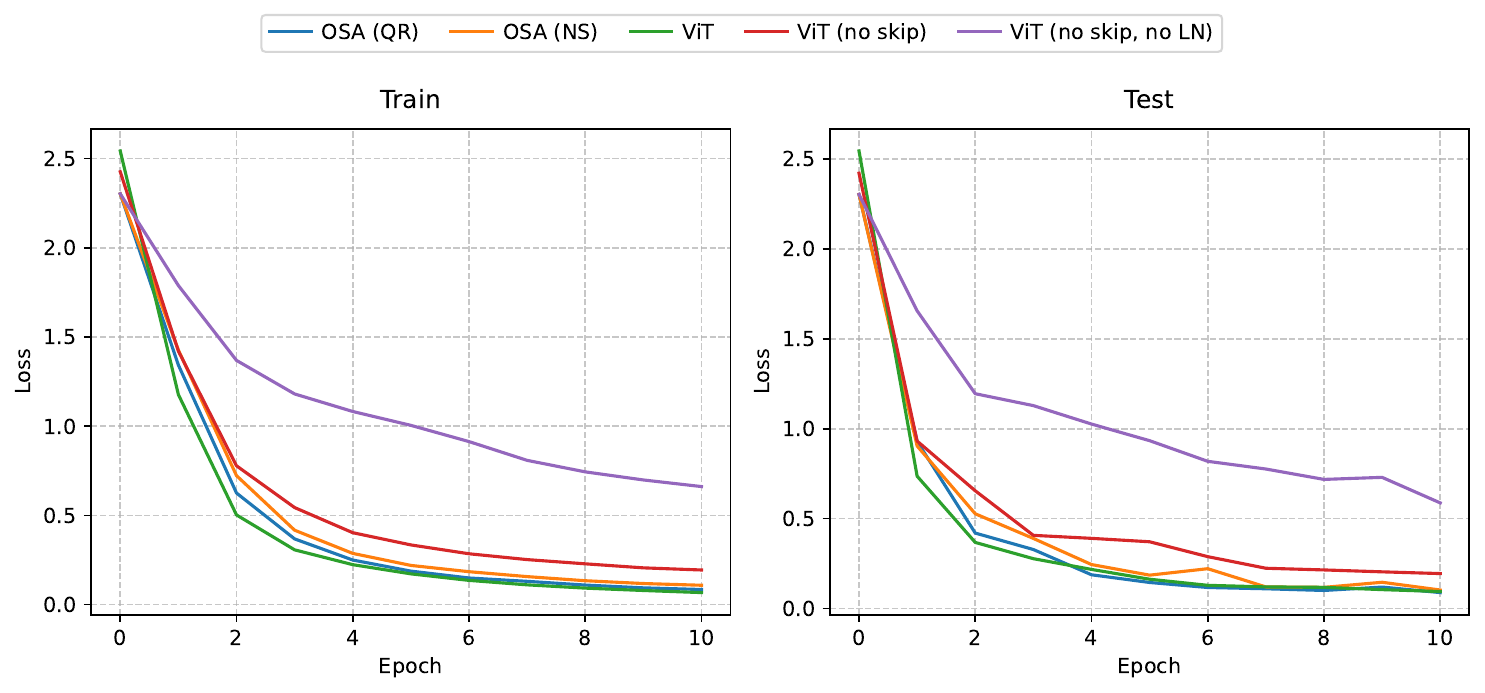}
    \caption{Train and test loss curves for OSA-Transformer and ViT models trained on MNIST for classification.}
    \label{fig:mnist}
\end{figure}

For the ViT models, we see that removing skip connections and layer norm results in worse training speed and generalisation, with the removal of layer norm especially degrading performance.
As for the OSA-Transformer models, we interestingly see that \emph{despite} removing skip connections and layer norm, our model is able to match the generalisation performance of the ViT baseline and approaches its training speed, outperforming the ViT without skip connections\footnote{We expect for more complex dataset that the performance difference between ViT and ViT (no skip) will be larger \citep{he2023deep, ji2025cutting}.}. Additionally, we see that QR appears to slightly outperform Newton-Schultz, and we note that we found QR to be numerically stable during training.

While these results are not extensive, they help to suggest that the design and proper initialisation of OSA can result in Transformer-based architectures that are able to be trained efficiently, even without the use of skip connections and normalisation layers.

\section{Conclusion}

In this work, we have introduced Orthogonal Self-Attention (OSA), which aims to circumvent the poor performance of SSA-based Transformers when skip connections and normalisation layers are removed, by parametrising the attention matrix to be orthogonal via the matrix exponential function.
In the future, we will look to expand on the empirical validation of OSA, as well as investigating the potential benefits that OSA might provide, in terms of representation learning, from allowing models to be trained without skip connections or normalisation layers.

\newpage

\subsection*{Acknowledgments}

LZ is supported by the EPSRC CDT in Modern Statistics and Statistical Machine Learning
(EP/S023151/1).
LZ would like to thank Alvaro Prat, Iskander Azangulov, Kianoosh Ashouritaklimi, Abbas Mammadov, Simon Vary and Yee Whye Teh for helpful conversations.


\begin{small}
\bibliography{main}
\end{small}

\newpage

\appendix

\section{Further Details on Complexity and Memory Analysis}\label{app:more-complex}

In Table \ref{tab:ortho_complexity}, we provide a breakdown of the computational complexity and memory cost of different components within OSA.

\begin{table*}[h!]
    \centering
    \caption{Computational and memory complexity of $\OSA$ components.}
    \label{tab:ortho_complexity}
    \setlength{\tabcolsep}{10pt} 
    \renewcommand{\arraystretch}{1.25} 
    \begin{tabularx}{\textwidth}{l c c}
        \toprule
        \textbf{Computation} & \textbf{Complexity} & \textbf{Memory} \\
        \midrule
        \textbf{Reduced QR on $\M$} & $O(Nd^2)$ & $O(Nd)$  \\
        \textbf{Newton-Schultz on $\M$} & $O(Nd^2)$ & $O(Nd)$  \\
        \textbf{Computing $[\S]$} & $O(Nd^2+d^3)$ & $O(Nd)$  \\
        \textbf{Computing $\exp[\S]$} & $O(d^3)$ & $O(d^2)$  \\
        \textbf{Computing Eq (\ref{eq:osa}) and applying $\X\W^V\W^O$} & $O(Nd^2 + d^3)$ & $O(Nd)$ \\
        \bottomrule
    \end{tabularx}
\end{table*}

\section{Sampling}\label{app:sampling}

We provide pseudocode for sampling from $\mathcal{U}_{n\times m}$ in Algorithm \ref{alg:stiefel_sampling}.
For further details on this algorithm, we refer to Section 4 and 5 in \cite{mezzadri2006generate}\footnote{This reference focuses on the case where $n=m$, however, to sample from $\mathcal{U}_{n\times m}$, we can equivalently sample from $\mathcal{U}_{n\times n}$ first and then truncate the matrix.}.

\begin{algorithm}
\caption{Sampling from $\mathcal{U}_{n\times m}$}
\label{alg:stiefel_sampling}
\begin{algorithmic}[1]
\Require Dimensions $n, m$ where $n\ge m$

\State Sample random matrix $\mathbf{Z} \in \mathbb{R}^{n\times m}$ with entries $\mathbf{Z}_{ij} \overset{\text{i.i.d.}}{\sim} \mathcal{N}(0, 1)$
\State Compute reduced QR decomposition: $\mathbf{Z}=\mathbf{Q}\mathbf{R}$ where $\mathbf{Q}\in\R^{n\times m}, \mathbf{R}\in\R^{m\times m}$

\For{$i \leftarrow 1$ to $m$}
    \State $s_i \leftarrow \text{sign}(\mathbf{R}_{ii})$
\EndFor

\State $\mathbf{Q} \leftarrow \mathbf{Q} \cdot \text{diag}(s_1, \ldots, s_m)$ \Comment{Multiplies $i$-th column of $\mathbf{Q}$ by $s_i$}

\State \Return $\mathbf{Q}$
\end{algorithmic}
\end{algorithm}

\section{Proofs}\label{app:proofs}

\subsection{Proof of Theorem \ref{thm:exp}}\label{proof:exp}
\begin{proof}
Firstly, let $U$ be the subspace given by the image of $\S$. 
This is spanned by the span of the column space of $\Q$ and $\K$.
Moreover, by the design of $\S$, we can view $\S$ as a linear operator defined on the subspace $U$ ($\dim U = r$) instead of the space $\R^N$.
To show this, take $x\in U^\perp$, then $\S x = \frac{\alpha}{\sqrt{d_v}}\left(\Q\K^\top - \K\Q^\top\right) x$.
By definition, $x$ is orthogonal to the columns of both $\Q, \K$, hence $\S x=0$.
Therefore, $\S$ can be considered as a linear mapping $U\to U$.

Secondly, to represent $\S$ in terms of a $\R^{r\times r}$ matrix, we take an orthonormal basis $u_1, \ldots, u_r$ of $U$ and define $\B\in\R^{N\times r}$ by $\B=[u_1, \ldots, u_r]$.
We note that $\B^\top: \R^N\to\R^r$ is a change-of-basis map sending a vector in $U$ to its coordinate representation given by $u_1, \ldots, u_r$ and that $\B:\R^r\to\R^N$ is the inverse mapping and $\B^\top \B = \I_r$.
From this, we can conclude that there exists a matrix $[\S] = \B^\top \S\B \in\R^{r\times r}$ that represents $\S$ in the coordinates given by $u_1, \ldots, u_r$, hence, we have
\begin{align}
    \S = \B[\S]\B^\top \implies \S^n = \B[\S]^n \B^\top.
\end{align}
Therefore, we have
\begin{align}
    \exp(\S) = \sum_{n=0}^\infty \frac{\S^n}{n!} = \I_N + \sum_{n=1}^\infty \frac{\S^n}{n!} = \I_N + \sum_{n=1}^\infty \frac{\B[\S]^n\B^\top}{n!} = \I_N + \B(\exp [\S] - \I_r)\B^\top.
\end{align}
\end{proof}

\subsection{Proof of Theorem \ref{thm:rank}}\label{proof:rank}
\begin{proof}
    It is trivial to show that:
    \begin{align}
        \X_l = \A\X_0\W,
    \end{align}
    where $\A=\A_l\ldots\A_1$ and $\A_l$ is the attention matrix computed at the $l$-th layer, and $\W = \W_1^V\W_1^O\ldots\W_l^V\W_l^O$.
    As we enforce that each attention matrix is orthogonal, we have that $\A\in\SO(N)$, and from our initialisation, we have that $\W\W^\top = \I_d$.
    This allows us to conclude the result.
\end{proof}

\subsection{Proof of Theorem \ref{thm:ns-converge}}\label{proof:ns-converge}
\begin{proof}
    Let $\EE = \B^\top\B - \I_{2d_v}$ and $\Delta = \exp[\S] - \I_{2d_v}$, then $\Y = \I_N + \B\Delta\B^\top$. 
    We first note the identity:
    \begin{align}
        \Delta^\top\Delta &= (\exp[\S]-\I_{2d_v})^\top(\exp[\S] - \I_{2d_v}) \\
        &= 2\I_{2d_v} - \exp[\S] - \exp[\S]^\top \\
        &= -\Delta - \Delta^\top.
    \end{align}
    We now expand $\Y^\top\Y-\I_N$:
    \begin{align}
        \Y^\top\Y-\I_N &= (\I_N+\B\Delta\B^\top)^\top(\I_N+\B\Delta\B^\top) -\I_N \\
        &= \B\Delta\B^\top + \B\Delta^\top\B^\top + \B\Delta^\top\B^\top\B\Delta\B^\top \\
        &= \B\Delta\B^\top + \B\Delta^\top\B^\top + \B\Delta^\top(\EE+\I_{2d_v})\Delta\B^\top \\
        &= \B\Delta\B^\top + \B\Delta^\top\B + \B\Delta^\top\EE\Delta\B^\top + \B\Delta^\top\Delta\B^\top \\
        &= \B\left( \Delta+\Delta^\top + \Delta^\top\EE\Delta + \Delta^\top\Delta \right)\B^\top \\
        &= \B\Delta^\top\EE\Delta\B^\top, \label{eq:orth-error}
    \end{align}
    where we apply our identity.
    Next, let $\B = \U\Sigma\V^\top$ be the SVD decomposition of $\B$ where $\U\in\R^{N\times N}, \Sigma\in\R^{N\times2d_v}, \V\in\R^{2d_v\times2d_v}$. 
    Using this decomposition, we have
    \begin{align}
        \Delta &= \exp(\B^\top\S\B) - \I_{2d_v} \\
        &= \exp(\V\Sigma^\top\U^\top\S\U\Sigma\V^\top) - \I_{2d_v} \\
        &= \V \hat{\Delta}\V^\top,
    \end{align}
    where $\hat{\Delta} = \exp([\hat{\S}]) - \I_{2d_v}$ and $[\hat{\S}] = \Sigma^\top\U^\top\S\U\Sigma$, and 
    \begin{align}
        \EE &= \B^\top\B - \I_{2d_v} \\
        &= \V\Sigma^\top\U^\top\U\Sigma\V^\top - \I_{2d_v} \\
        &= \V \hat{\EE} \V^\top,
    \end{align}
    where $\hat{\EE} = \Sigma^\top\Sigma - \I_{2d_v}$.
    Further, applying this to Equation \ref{eq:orth-error}, we have
    \begin{align}
        \Y^\top\Y - \I_N &= \U\Sigma\hat{\Delta}^\top\hat{\EE} \hat{\Delta}\Sigma^\top\U^\top. \label{eq:y-orth-error}
    \end{align}
    Next, we note we have the following decomposition of $\hat{\Delta}$:
    \begin{align}
        \hat{\Delta} &= \exp(\Sigma^\top\U^\top\S\U\Sigma) - \I_{2dv} \\
        &= \sum_{n=1}^\infty \frac{1}{n!} (\Sigma^\top\U^\top\S\U\Sigma)^n \\
        &= \Sigma^\top \left( \underbrace{\sum_{n=1}^\infty \frac{1}{n!} (\U^\top\S\U \Sigma\Sigma^\top)^{n-1}\U^\top\S\U}_{\Psi} \right)\Sigma,
    \end{align}
    which allows us to rewrite Equation \ref{eq:y-orth-error} as
    \begin{align}
        \Y^\top\Y-\I_{N} &= \U\Sigma\Sigma^\top\Psi^\top\Sigma\hat{\EE} \Sigma^\top\Psi\Sigma\Sigma^\top\U^\top.
    \end{align}
    This implies the following bound:
    \begin{align}
        \norm{\Y^\top\Y-\I_N}_2 &\le \norm{\U}_2^2 \norm{\Sigma}_2^4\norm{\Psi}_2^2\norm{\Sigma(\Sigma^\top\Sigma-\I_{2d_v})\Sigma^\top}_2 \\
        &\le \norm{\Psi}_2^2 \norm{\Sigma(\Sigma^\top\Sigma - \I_{2dv})\Sigma^\top}_2,
    \end{align}
    where we note that $\norm{\Sigma}_2\le 1$ as we normalise the singular values of $\M_0$ to be less than 1 and the Newton-Schultz iterations cannot increase initial singular values in the range $(0, 1]$ to be more than 1.

    To analyse $\norm{\Psi}_2$, we have the following bound:
    \begin{align}
        \norm{\Psi}_2 &\le \sum_{n=1}^\infty \frac{1}{n!} \norm{\U^\top\S\U\Sigma\Sigma^\top}_2^{n-1}\norm{\U^\top\S\U}_2 \\
        &\le \sum_{n=1}^\infty \frac{\norm{\S}_2^{n}}{n!} \\
        &= e^{\norm{\S}_2} - 1.
    \end{align}
    To analyse $\norm{\Sigma(\Sigma^\top\Sigma - \I_{2dv})\Sigma^\top}_2$, we note that $\Sigma$ has a diagonal form with non-zero values given by $\sigma_i(\B)$ which are the singular values of $\B$.
    Therefore, the singular values of the matrix $\Sigma(\Sigma^\top\Sigma - \I_{2dv})\Sigma^\top$ have the form: $|\sigma_i(\B)^2(\sigma_i(\B)^2-1)|$, hence
    \begin{align}
        \norm{\Sigma(\Sigma^\top\Sigma - \I_{2dv})\Sigma^\top}_2 = \max_i |\sigma_i(\B)^2(\sigma_i(\B)^2-1)|.
    \end{align}
    We note that by construction $\sigma_i(\B)\in[0,1]$ therefore the maximum possible value of the above expression is $\frac{1}{4}$ (from inspecting the graph of $|x^2(x^2-1)|$).

    Putting all of this together, we get the final result:
    \begin{align}
        \norm{\Y^\top\Y-\I_N}_2 \le \left( e^{\norm{\S}_2} - 1 \right)^2 \max_i |\sigma_i(\B)^2(\sigma_i(\B)^2 - 1)|.
    \end{align}

\end{proof}

\subsection{Proof of Theorem \ref{thm:jacobian}}\label{proof:jacobian}
\begin{proof}
    This is a standard application of matrix differentials \citep{magnus2019matrix}. 
    By the product rule of matrix differentials, we have
    \begin{align}
        \partial\OSA(\X) = \partial\A(\X)(\X\W^V\W^O) + \A(\X)\partial\X (\W^V\W^O) + \A(\X)\X\partial(\W^V\W^O).
    \end{align}
    By the identity, $(A\otimes B)\vec C = \vec(BCA^\top)$, we have
    \begin{align}
        \partial\vec \OSA(\X) &= ((\X\W^V\W^O)^\top\otimes\I_N)\partial\vec\A(\X) + ((\W^V\W^O)^\top\otimes\A(\X))\partial\vec \X \\
        &+ (\I_{d_v}\otimes\A(\X))\partial\vec(\W^V\W^O).
    \end{align}
    We note that $\partial\vec\X/\partial\vec\X = \I$ and $\partial\vec(\W^V\W^O)/\partial\vec\X = 0$. 
    Therefore, we can conclude the following form:
    \begin{align}
        \frac{\partial\vec\OSA(\X)}{\partial\vec\X} = (\X\W^V\W^O\otimes \I_N)^\top \frac{\partial \vec \A(\X)}{\partial\vec \X} + (\W^V\W^O)^\top \otimes \A(\X).
    \end{align}
\end{proof}

\subsection{Proof of Theorem \ref{thm:query-key-init}}\label{proof:query-key-init}
\begin{proof}
    We have $\Tilde{\W} = \W^Q(\W^K)^\top - \W^K(\W^Q)^\top$. 
    We consider $\Tilde{\W}\Tilde{\W}^\top$:
    \begin{align}
        \Tilde{\W}\Tilde{\W}^\top &= -\Tilde{\W}^2 \\
        &= -\left( \W^Q(\W^K)^\top - \W^K(\W^Q)^\top \right) \left( \W^Q(\W^K)^\top - \W^K(\W^Q)^\top \right) \\
        &= \W^Q(\W^Q)^\top + \W^K(\W^K)^\top,
    \end{align}
    where we use the fact that $\Tilde{\W}$ is skew-symmetric and $(\W^Q)^\top\W^Q = (\W^K)^\top\W^K = \I$ and $(\W^Q)^\top\W^K = 0$.
    We note that the right-hand side is a projection matrix onto the subspace spanned by $\W^Q, \W^K$ therefore the non-zero eigenvalues of $\Tilde{\W}\Tilde{\W}^\top$ are all 1.
    This implies the result as the singular values of $\Tilde{\W}$ are the square-root of the absolute value of the eigenvalues of $\Tilde{\W}\Tilde{\W}^\top$.
\end{proof}

\subsection{Proof of Theorem \ref{thm:alpha-linear}}\label{proof:alpha-linear}

To organise our proof, we present the following lemmas.
We leave the original assumptions in the statement of the theorem implicit.

\begin{lemma}\label{lemma:B}
    There exists some constant $C>0$ such that
    \begin{align}
        \norm{\B}_2 \le C.
    \end{align}
\end{lemma}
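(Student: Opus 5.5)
The plan is to track the singular values of the Newton--Schultz iterates $\M_k$ directly, since $\B=\M_K$ is obtained from $\M_0 = \M/(\norm{\M}_F+\epsilon)$ by a fixed polynomial map. The key observation is that each iterate has the form $\M_{k+1} = \M_k\, p(\M_k^\top\M_k)$ with $p(t)=\tfrac12(3-t)$. Hence $\M_k$ and $\M_{k+1}$ share left and right singular vectors, and each singular value evolves under the scalar map
\begin{align*}
f(\sigma)=\tfrac12\sigma(3-\sigma^2).
\end{align*}
This follows by writing the SVD $\M_k=\U_k\Sigma_k\V_k^\top$ and substituting, which gives $\M_{k+1}=\U_k\, f(\Sigma_k)\,\V_k^\top$ with $f$ applied entrywise to the diagonal. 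The absolute values of the diagonal entries are then the singular values of $\M_{k+1}$.

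First, I bound the initial singular values. Since $\sigma_{\max}(\M_0)\le \norm{\M}_F/(\norm{\M}_F+\epsilon) <1$, all singular values of $\M_0$ lie in $[0,1)$; a zero matrix $\M$ gives $\M_0=0$, which is trivially fine. Second, I verify that $f$ maps $[0,1]$ into $[0,1]$. We have $f(0)=0$, $f(1)=1$, and $f'(\sigma)=\tfrac32(1-\sigma^2)\ge 0$ on $[0,1]$, so $f$ is monotone nondecreasing there and $f([0,1])=[0,1]$. Induction on $k$ then gives that every singular value of $\M_k$ lies in $[0,1]$ for all $k\le K$. In particular,
\begin{align*}
\norm{\B}_2=\sigma_{\max}(\M_K)\le 1,
\end{align*}
so the lemma holds with $C=1$. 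Notably, this bound is uniform in $\X$, $K$ and $\alpha$, and it needs no assumption on $\X$ beyond what is stated.

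I expect the only delicate point to be the spectral calculus step. One must confirm that a polynomial in $\M_k^\top\M_k$ acts diagonally in the right singular basis, so that singular vectors are preserved and no cross terms arise. This is immediate from $\M_k^\top\M_k=\V_k\Sigma_k^\top\Sigma_k\V_k^\top$. A secondary point is that $f(\sigma)\ge0$ on $[0,1]$, so no sign issues arise in identifying the new singular values. This is the same fact already invoked informally in the proof of Theorem \ref{thm:ns-converge}, namely that $\norm{\Sigma}_2\le 1$, so the argument also retroactively justifies that step.
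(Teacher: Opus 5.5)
Your proof is correct and takes the same approach as the paper. In both, pre-normalisation puts the singular values of $\M_0$ in $[0,1)$, and via the SVD each Newton--Schulz step acts on the singular values through the scalar polynomial $f(\sigma)=\tfrac12\sigma(3-\sigma^2)$. Your monotonicity argument showing $f([0,1])\subseteq[0,1]$ tightens the paper's sketch, which asserts only a looser $\sqrt{3}$ bound via a fixed-point remark, and it yields the explicit constant $C=1$ that the proof of Theorem~\ref{thm:ns-converge} relies on.
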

\begin{proof}
    When using Newton-Schultz, we pre-normalise our matrix $\M$ so that the singular values are less than 1 and we note that Newton-Schultz cannot increase the singular values greater than $\sqrt{3}$ for all choices of $K$\footnote{To see this, we note that we can write the Newton-Schultz iterates as a polynomial applied to the singular values of $\M$ (use the SVD decomposition) where the repeated application of the polynomial has a fixed point at 1 for values in the range $(0, \sqrt{3})$.}.
\end{proof}

\begin{lemma}\label{lemma:S}
    There exists some constant $C>0$ such that
    \begin{align}
        \norm{\S}_2 \le C\alpha.
    \end{align}
\end{lemma}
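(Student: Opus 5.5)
We bound $\S$ directly from its definition. Recall
\begin{align}
    \S = \frac{\alpha}{\sqrt{d_v}}\left(\Q\K^\top - \K\Q^\top\right) = \frac{\alpha}{\sqrt{d_v}}\,\X\left(\W^Q(\W^K)^\top - \W^K(\W^Q)^\top\right)\X^\top = \frac{\alpha}{\sqrt{d_v}}\,\X\Tilde{\W}\X^\top,
\end{align}
using $\Q=\X\W^Q$ and $\K=\X\W^K$. Since $\alpha$ multiplies a matrix that is independent of $\alpha$, the proof is just a spectral-norm bound on $\X\Tilde{\W}\X^\top$.

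By submultiplicativity of the spectral norm,
\begin{align}
    \norm{\S}_2 \le \frac{|\alpha|}{\sqrt{d_v}}\norm{\X}_2^2\norm{\Tilde{\W}}_2 .
\end{align}
Under the query-key initialisation, Theorem \ref{thm:query-key-init} shows that the non-zero singular values of $\Tilde{\W}$ all equal $1$, so $\norm{\Tilde{\W}}_2 = 1$. Without invoking that theorem, the triangle inequality still gives $\norm{\Tilde{\W}}_2\le 2\norm{\W^Q}_2\norm{\W^K}_2 = 2$, because $\W^Q,\W^K$ have orthonormal columns. By the standing assumption of Theorem \ref{thm:alpha-linear}, $\norm{\X}_2\le R$ for some constant $R$. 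Hence
\begin{align}
    \norm{\S}_2 \le \frac{R^2}{\sqrt{d_v}}\,\alpha ,
\end{align}
and we can take $C = R^2/\sqrt{d_v}$.

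The only point needing care is the sign of $\alpha$. The bound holds with $|\alpha|$, and the statement's $\alpha$ is understood as positive, consistent with Theorem \ref{thm:condition}, which takes $\alpha>0$. No step here is a real obstacle; the lemma matters only because its constant later feeds into the derivative of the matrix exponential and the Newton-Schultz map in the bound on $\J_1$.
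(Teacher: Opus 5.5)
Your proof is correct and follows the paper's argument: write $\S = \frac{\alpha}{\sqrt{d_v}}\X\Tilde{\W}\X^\top$, apply submultiplicativity, and use $\norm{\Tilde{\W}}_2 = 1$ from the query-key initialisation together with the assumed bound on $\norm{\X}_2$. Bounding by $|\alpha|$ makes the sign of $\alpha$ explicit, which the paper leaves implicit by taking $\alpha>0$.
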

\begin{proof}
    To analyse $\S$, we let $\Tilde{\W} = \W^Q(\W^K)^\top - \W^K(\W^Q)^\top$.
    We can write $\S = \frac{\alpha}{\sqrt{d_v}}\X\Tilde{\W}\X^\top$ which provides the bound
    \begin{align}
        \norm{\S}_2 &\le \frac{\alpha}{\sqrt{d_v}}\norm{\X}_2^2\norm{\Tilde{\W}}_2 \\
        &\le \frac{\alpha}{\sqrt{d_v}}\norm{\X}_2^2.
    \end{align}
    From our initialisation, we have $\norm{\Tilde{\W}}_2=1$ and by assumption, the spectral norm of $\X$ is bounded which allows us to conclude the result.
\end{proof}

\begin{lemma}\label{lemma:expS-I}
    There exists some constant $C>0$ such that
    \begin{align}
        \norm{\exp[\S] - \I_{2d_v}}_2 \le C\alpha.
    \end{align}
\end{lemma}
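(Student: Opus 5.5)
The plan is to bound the exponential series directly, combining Lemma \ref{lemma:B} and Lemma \ref{lemma:S}. Write $[\S] = \B^\top\S\B$. By submultiplicativity of the spectral norm,
\begin{align}
    \norm{[\S]}_2 \le \norm{\B}_2^2\norm{\S}_2 \le C_B^2 C_S\,\alpha,
\end{align}
where $C_B$ is the constant from Lemma \ref{lemma:B} and $C_S$ the constant from Lemma \ref{lemma:S}. Set $a = C_B^2C_S\alpha$ for brevity. Note that we cannot use orthogonality of $\exp[\S]$ here, since $\B$ need not be exactly orthonormal under Newton-Schultz. This is why the argument goes through the power series rather than through any spectral identity.

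Next, I would expand $\exp[\S]-\I_{2d_v} = \sum_{n\ge1}[\S]^n/n!$ and bound it termwise by the triangle inequality. This gives
\begin{align}
    \norm{\exp[\S]-\I_{2d_v}}_2 \le \sum_{n=1}^\infty \frac{a^n}{n!} = e^{a}-1 = a\sum_{n=1}^\infty\frac{a^{n-1}}{n!} \le a\,e^{a}.
\end{align}

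The remaining step, and the only real subtlety, is turning $a e^{a}$ into a bound of the form $C\alpha$ with $C$ independent of $\alpha$. I would restrict to a bounded range of $\alpha$, say $0<\alpha\le\alpha_0$; this is natural given that the initialisation uses small $\alpha$. On that range, $e^{a}\le e^{C_B^2C_S\alpha_0}$, so the lemma holds with $C = C_B^2C_S\,e^{C_B^2C_S\alpha_0}$. Equivalently, one can use the mean value theorem for $e^{a}-1$ on $[0,a]$.

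The constant $C$ depends on $\alpha_0$, on the bound on $\norm{\X}_2$, and on $d_v$, but not on $\alpha$ itself. I would state the restriction on $\alpha$ explicitly, since without it the bound is only linear for small $\alpha$ and exponential in general.
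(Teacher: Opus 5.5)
Your argument is valid, but it proves only a local version of the lemma, and your reason for avoiding orthogonality is mistaken. The matrix $[\S]=\B^\top\S\B$ is skew-symmetric for \emph{any} real $\B$, since $[\S]^\top=\B^\top\S^\top\B=-[\S]$. Hence $\exp(t[\S])\in\SO(2d_v)$ for every $t$, whether or not Newton--Schulz has converged. A non-orthonormal $\B$ only spoils the orthogonality of the full matrix $\Y=\I_N+\B(\exp[\S]-\I_{2d_v})\B^\top$, which is the issue in Theorem~\ref{thm:ns-converge}. That proof itself relies on $\exp[\S]$ being orthogonal, through the identity $\Delta^\top\Delta=-\Delta-\Delta^\top$.

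The paper exploits exactly this fact. It writes $\exp[\S]-\I_{2d_v}=\int_0^1[\S]\exp(t[\S])\,dt$ and uses $\norm{\exp(t[\S])}_2=1$. This gives $\norm{\exp[\S]-\I_{2d_v}}_2\le\norm{[\S]}_2\le\norm{\B}_2^2\norm{\S}_2$. Lemmas~\ref{lemma:B} and~\ref{lemma:S} then yield $C=C_B^2C_S$ for every $\alpha>0$, with no cap $\alpha_0$. Equivalently, the eigenvalues of $[\S]$ are $i\theta_j$, so $\norm{\exp[\S]-\I_{2d_v}}_2=\max_j|e^{i\theta_j}-1|\le\min(2,\norm{[\S]}_2)$.

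Your termwise power-series estimate $e^{a}-1$ throws away this skew-symmetric structure. That is the only reason you need $\alpha\le\alpha_0$ and a constant depending on $\alpha_0$. Your closing remark that the bound is ``exponential in general'' describes your estimate, not the quantity, which is globally bounded by both $2$ and $\norm{[\S]}_2$. For the small-$\alpha$ regime used in Theorem~\ref{thm:condition} your version suffices. The paper's route, however, is both sharper and unconditional.
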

\begin{proof}
    To control $\norm{\exp[\S]-\I_{2d_v}}_2$, we note that 
    \begin{align}
        \frac{\partial}{\partial t} \exp(t\D) = \D\exp(t\D).
    \end{align}
    Therefore,
    \begin{align}
        \int_0^1 [\S]\exp(t[\S]) dt &= \exp[\S] - \I_{2d_v} \\
        \implies \norm{\exp[\S] - \I_{2d_v}}_2 &\le \int_0^1 \norm{[\S]}_2\norm{\exp(t[\S])}_2 \\
        &\le \norm{[\S]}_2 \\
        &\le \norm{\B}_2^2\norm{\S}_2,
    \end{align}
    since $\exp(t[\S])\in\SO(2d_v)$ for all $t\in[0, 1]$.
    We can apply Lemmas \ref{lemma:B} and \ref{lemma:S} to conclude.
\end{proof}

\begin{lemma}\label{lemma:BM_0}
    There exists some constant $C>0$ such that
    \begin{align}
        \left\lVert \frac{\partial\vec\B}{\partial\vec\M_0} \right\rVert_2 \le C.
    \end{align}
\end{lemma}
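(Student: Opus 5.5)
We want a uniform bound on the Jacobian of the map $\M_0\mapsto\B=\M_K$. Here $K$ is a fixed, finite number of Newton--Schultz iterations. The plan is to differentiate the recursion
\[
\M_{k+1}=\tfrac12\M_k(3\I_{2d_v}-\M_k^\top\M_k)
\]
one step at a time and apply the chain rule. This gives
\[
\frac{\partial\vec\B}{\partial\vec\M_0}=\prod_{k=0}^{K-1}\frac{\partial\vec\M_{k+1}}{\partial\vec\M_k},
\]
so by submultiplicativity it suffices to bound each one-step Jacobian by a constant independent of $\X$.

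For a single step, take the differential:
\[
\partial\M_{k+1}=\tfrac12\,\partial\M_k(3\I-\M_k^\top\M_k)-\tfrac12\M_k\big((\partial\M_k)^\top\M_k+\M_k^\top\partial\M_k\big).
\]
Vectorising with $(A\otimes B)\vec C=\vec(BCA^\top)$ and the commutation matrix $\mathbf{T}$ (with $\vec(\partial\M^\top)=\mathbf{T}\vec\partial\M$, $\norm{\mathbf{T}}_2=1$), the one-step Jacobian becomes a sum of three Kronecker-type terms:
\[
\tfrac12(3\I-\M_k^\top\M_k)\otimes\I_N,\qquad \tfrac12(\M_k^\top\otimes\M_k)\mathbf{T},\qquad \tfrac12\I\otimes\M_k\M_k^\top.
\]
Since $\norm{A\otimes B}_2=\norm{A}_2\norm{B}_2$, the triangle inequality gives the per-step bound
\[
\tfrac12\big(3+\norm{\M_k}_2^2\big)+\tfrac12\norm{\M_k}_2^2+\tfrac12\norm{\M_k}_2^2 \;\le\; \tfrac32\big(1+\norm{\M_k}_2^2\big).
\]

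The main obstacle is bounding $\norm{\M_k}_2$ uniformly in $k$ and $\X$. I would reuse the argument of Lemma~\ref{lemma:B}. Writing $\M_0=\U\Sigma\V^\top$, each iterate has the form $\M_k=\U p_k(\Sigma)\V^\top$, where $p_k$ is the $k$-fold composition of $p(x)=\tfrac12x(3-x^2)$. The pre-normalisation $\M_0=\M/(\norm{\M}_F+\epsilon)$ forces $\sigma_i(\M_0)\in[0,1)$. The map $p$ sends $[0,1]$ into $[0,1]$: it is increasing there and $p(1)=1$. Hence $\norm{\M_k}_2\le1$ for every $k$, and each step's Jacobian has norm at most $3$.

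Combining the steps gives
\[
\left\lVert \frac{\partial\vec\B}{\partial\vec\M_0}\right\rVert_2\le 3^K=:C.
\]
This constant depends only on the fixed $K$. If a sharper constant is wanted, one could instead work in the singular basis and use that $p'(x)=\tfrac32(1-x^2)\le\tfrac32$, but that refinement is not needed here.
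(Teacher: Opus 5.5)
Your proof is correct, and it reaches the bound by a more explicit route than the paper. The paper first bounds $\norm{\M_0}_2 \le \sqrt{2}\norm{\X}_2/\epsilon$, using the hypotheses that $\norm{\X}_2$ is bounded and $\norm{\W^Q}_2=\norm{\W^K}_2=1$. It then argues qualitatively: since $\B=\M_K$ is a fixed polynomial in $\M_0$, its Jacobian is a polynomial in the entries of $\M_0$, and so it is bounded on a bounded set.

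You instead differentiate the recursion step by step and bound each one-step Jacobian by $\tfrac32(1+\norm{\M_k}_2^2)$. You control every intermediate iterate through the scalar map $p(x)=\tfrac12 x(3-x^2)$, which leaves $[0,1]$ invariant. For this you need only the Frobenius pre-normalisation, which gives $\norm{\M_0}_2<1$. (The last two Kronecker terms should carry minus signs, but this is harmless under the triangle inequality.)

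Your route gives an explicit constant $3^K$ that does not depend on $\X$, $\epsilon$ or the query--key weights. So the lemma in fact holds without the bounded-$\X$ and initialisation assumptions. The paper's implicit constant depends on $\norm{\X}_2/\epsilon$ and degenerates as $\epsilon\to 0$, because it ignores that the normalisation already forces $\norm{\M_0}_2\le\norm{\M}_F/(\norm{\M}_F+\epsilon)<1$. In exchange, the paper's argument is shorter and would apply verbatim to any fixed polynomial iteration, without needing an invariance property like that of $p$ on $[0,1]$.
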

\begin{proof}
    We first note that $\M_0$ has the form $\M/\beta$ where $\M=[\X\W^Q, \X\W^K]$ and $\beta=\norm{\M}_F+\epsilon$ where $\epsilon>0$.
    We have
    \begin{align}
        \norm{\M_0}_2 &\le \frac{1}{\epsilon} \norm{\M}_2 \\
        &\le \frac{1}{\epsilon}\sqrt{ \norm{\X\W^Q}_2^2 + \norm{\X\W^K}_2^2 } \\
        &\le \frac{\sqrt{2}}{\epsilon}\norm{\X}_2 \\
        &\le C_{1},
    \end{align}
    where $C_{1}>0$ is some constant and we use that our initialisation ensures $\norm{\W^Q}_2=\norm{\W^K}_2=1$ and $\X$ has bounded spectral norm.
     Next, as the output $\B=\M_K$ of Newton-Schultz can be expressed as a polynomial, then the matrix differential of $\B$ can be expressed in terms of some polynomial of $\M_0$ which implies the Jacobian has bounded spectral norm.
\end{proof}

\begin{lemma}\label{lemma:M_0-X}
    There exists some constant $C>0$ such that
    \begin{align}
        \left\lVert \frac{\partial\vec\M_0}{\partial\vec\X} \right\rVert_2 \le C.
    \end{align}
\end{lemma}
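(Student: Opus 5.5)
We bound the Jacobian of the map $\X\mapsto\M_0=\M/\beta$, where $\M=[\X\W^Q,\X\W^K]=\X[\W^Q,\W^K]$ and $\beta=\norm{\M}_F+\epsilon$, using the quotient rule for matrix differentials.

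\textbf{Step 1: the differential.} We get
\begin{align}
    \partial\M_0 = \frac{1}{\beta}\partial\M - \frac{\M}{\beta^2}\,\partial\beta, \qquad \partial\M = \partial\X\,[\W^Q,\W^K], \qquad \partial\beta = \frac{\tr(\M^\top\partial\M)}{\norm{\M}_F}.
\end{align}
The formula for $\partial\beta$ holds wherever $\M\neq 0$. At $\M=0$ we can use a one-sided or subgradient argument, or simply note that $\partial\beta$ is bounded in norm by $\norm{\partial\M}_F$ because the Frobenius norm is $1$-Lipschitz.

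\textbf{Step 2: vectorise.} Using $(A\otimes B)\vec C=\vec(BCA^\top)$, we have $\vec\partial\M=([\W^Q,\W^K]^\top\otimes\I_N)\vec\partial\X$. This gives
\begin{align}
    \frac{\partial\vec\M_0}{\partial\vec\X} = \left(\frac{1}{\beta}\I - \frac{\vec\M\,(\vec\M)^\top}{\beta^2\norm{\M}_F}\right)\left([\W^Q,\W^K]^\top\otimes\I_N\right).
\end{align}

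\textbf{Step 3: bound each factor.}
\begin{itemize}
    \item The Kronecker factor has spectral norm $\norm{[\W^Q,\W^K]}_2=1$, because by the query-key initialisation the matrix $[\W^Q,\W^K]$ has orthonormal columns.
    \item The first term of the bracket has norm $1/\beta\le 1/\epsilon$.
    \item The rank-one term has norm $\norm{\M}_F^2/(\beta^2\norm{\M}_F)=\norm{\M}_F/\beta^2\le 1/\beta\le1/\epsilon$, since $\norm{\M}_F\le\beta$.
\end{itemize}
Combining these gives $\norm{\partial\vec\M_0/\partial\vec\X}_2\le 2/\epsilon$, so we may take $C=2/\epsilon$.

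This bound does not even need the assumption that $\X$ is bounded. The main subtlety is the non-differentiability of $\norm{\M}_F$ at $\M=0$. We handle it as in Step 1: $\epsilon>0$ keeps $\beta$ bounded away from zero, and the rank-one term stays bounded wherever it is defined. Its limit is bounded as well, because the direction $\vec\M/\norm{\M}_F$ has unit norm.
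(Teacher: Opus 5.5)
Your proof is correct and follows the same route as the paper: you apply the quotient rule to $\M/(\norm{\M}_F+\epsilon)$, vectorise to obtain $\frac{1}{\beta}\I - \frac{\vec\M(\vec\M)^\top}{\beta^2\norm{\M}_F}$, and bound the norm of the result. Your version is slightly more complete than the paper's, because you include the chain-rule factor $[\W^Q,\W^K]^\top\otimes\I_N$ (the paper only computes $\partial\vec\M_0/\partial\vec\M$) and get the explicit constant $2/\epsilon$ without needing $\X$ to be bounded; also, you need not worry about $\M=0$, since the factor $\M$ cancels the kink of $\norm{\M}_F$, so $\M_0$ is differentiable there with Jacobian $\frac{1}{\epsilon}\I$ with respect to $\vec\M$.
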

\begin{proof}
    We compute 
    \begin{align}
        \partial\M_0 = \frac{1}{\norm{\M}_F+\epsilon}\partial\M + \M \partial\frac{1}{\norm{\M}_F+\epsilon},
    \end{align}
    and by using the fact that $\norm{\M}_F^2=\tr(\M^\top\M)$ and standard rules of matrix differentials, we have
    \begin{align}
        \partial\frac{1}{\norm{\M}_F+\epsilon} &= -\frac{1}{(\norm{\M}_F+\epsilon)^2}\frac{1}{2\norm{\M}_F}\partial\tr(\M^\top\M) \\
        &= -\frac{1}{(\norm{\M}_F+\epsilon)^2}\frac{1}{2\norm{\M}_F} \tr\left( \partial\M^\top \M + \M^\top \partial\M  \right) \\
        &= -\frac{1}{(\norm{\M}_F+\epsilon)^2}\frac{1}{\norm{\M}_F}\tr\M^\top\partial\M.
    \end{align}
    Therefore, we have 
    \begin{align}
        \partial\M_0 = \frac{1}{\norm{\M}_F+\epsilon}\partial\M -\frac{1}{(\norm{\M}_F+\epsilon)^2}\frac{1}{\norm{\M}_F}\M\tr\M^\top\partial\M.
    \end{align}
    After vectorising this, we have
    \begin{align}
        \partial\vec\M_0 &= \frac{1}{\norm{\M}_F+\epsilon}\partial\vec\M - \frac{1}{(\norm{\M}_F+\epsilon)^2}\frac{1}{\norm{\M}_F}\vec\M\vec\M^\top\partial\vec\M \\
        \implies \frac{\partial\vec\M_0}{\partial\vec\M} &= \frac{1}{\norm{\M}_F+\epsilon}\I - \frac{1}{(\norm{\M}_F+\epsilon)^2}\frac{1}{\norm{\M}_F}\vec\M\vec\M^\top,
    \end{align}
    where we use the trivial fact that $\tr\M^\top\partial\M = \vec\M^\top\partial\vec\M$. 
    It is easy to see that the Frobenius norm of the Jacobian is bounded as we assume $\norm{\M}_2$ is bounded which implies $\norm{\M}_F$ is also bounded (from the inequality $\norm{\M}_2\le\norm{\M}_F\le\sqrt{\rank(\M)}\norm{\M}_2$).
    This allows us to conclude the result. 
\end{proof}

\begin{lemma}\label{lemma:B-X}
    There exists some constant $C>0$ such that
    \begin{align}
        \left\lVert \frac{\partial\vec\B}{\partial\vec\X} \right\rVert_2 \le C.
    \end{align}
\end{lemma}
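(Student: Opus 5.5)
The plan is to apply the chain rule: $\B$ depends on $\X$ only through $\M_0$, so
\begin{align}
    \frac{\partial\vec\B}{\partial\vec\X} = \frac{\partial\vec\B}{\partial\vec\M_0}\,\frac{\partial\vec\M_0}{\partial\vec\X}.
\end{align}
Submultiplicativity of the spectral norm then gives
\begin{align}
    \left\lVert \frac{\partial\vec\B}{\partial\vec\X}\right\rVert_2 \le \left\lVert \frac{\partial\vec\B}{\partial\vec\M_0}\right\rVert_2 \left\lVert \frac{\partial\vec\M_0}{\partial\vec\X}\right\rVert_2 .
\end{align}
The first factor is bounded by Lemma \ref{lemma:BM_0}. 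The second factor needs one more chain-rule step, through $\M = [\X\W^Q, \X\W^K]$:
\begin{align}
    \frac{\partial\vec\M_0}{\partial\vec\X} = \frac{\partial\vec\M_0}{\partial\vec\M}\,\frac{\partial\vec\M}{\partial\vec\X}.
\end{align}
The proof of Lemma \ref{lemma:M_0-X} effectively bounds $\partial\vec\M_0/\partial\vec\M$. If that lemma is read as already covering the full map to $\X$, I simply invoke it.

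For completeness I would also bound $\partial\vec\M/\partial\vec\X$ explicitly. By the identity $(A\otimes B)\vec C=\vec(BCA^\top)$, we have $\vec(\X\W^Q) = ((\W^Q)^\top\otimes\I_N)\vec\X$, and similarly for $\W^K$. Hence $\partial\vec\M/\partial\vec\X$ is the vertical stack of $(\W^Q)^\top\otimes\I_N$ and $(\W^K)^\top\otimes\I_N$, which is $[\W^Q,\W^K]^\top\otimes\I_N$. Under the query-key initialisation $[\W^Q,\W^K]$ has orthonormal columns, so this Jacobian has spectral norm exactly $1$.

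Multiplying the constants from Lemmas \ref{lemma:BM_0} and \ref{lemma:M_0-X}, together with this factor of $1$, gives the claimed constant $C$.

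I expect the only delicate point to be making sure the constants are uniform. The bound in Lemma \ref{lemma:M_0-X} involves $1/\norm{\M}_F$ terms, and these blow up near $\M=0$. I would handle this as follows. The combination $\frac{1}{(\norm{\M}_F+\epsilon)^2\norm{\M}_F}\vec\M\vec\M^\top$ has spectral norm $\norm{\M}_F/(\norm{\M}_F+\epsilon)^2 \le 1/\epsilon$. It is therefore bounded uniformly, given that $\epsilon>0$ and $\norm{\X}_2$ is bounded. With that observation, all constants depend only on $\epsilon$, $K$, $d_v$ and the bound on $\norm{\X}_2$, which completes the argument.
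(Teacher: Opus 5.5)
Your proposal is correct and follows the paper's argument. The paper proves this lemma in one line, by the chain rule through $\M_0$ together with Lemmas \ref{lemma:BM_0} and \ref{lemma:M_0-X}. Your extra steps fill in details that the paper's proof of Lemma \ref{lemma:M_0-X} glosses over: the explicit factor $\partial\vec\M/\partial\vec\X = [\W^Q,\W^K]^\top\otimes\I_N$ of spectral norm $1$, and the uniform bound $\norm{\M}_F/(\norm{\M}_F+\epsilon)^2\le 1/\epsilon$ on the rank-one term. That proof only differentiates with respect to $\M$, and it attributes boundedness to the upper bound on $\norm{\M}$ rather than to $\epsilon>0$.
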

\begin{proof}
    This is a simple consequence from the chain rule and Lemmas \ref{lemma:BM_0} and \ref{lemma:M_0-X}.
\end{proof}

\begin{lemma}\label{lemma:exp-jac}
    We have
    \begin{align}
         \left\lVert \frac{\partial\vec\exp[\S]}{\partial\vec[\S]} \right\rVert_2 \le 1.
    \end{align}
\end{lemma}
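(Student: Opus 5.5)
The plan is to use the integral representation of the Fréchet derivative of the matrix exponential, together with the fact that $[\S]$ is skew-symmetric. First I will check that $[\S] = \B^\top\S\B$ is skew-symmetric even when $\B$ is not exactly orthonormal, as happens with Newton-Schultz at finite $K$: indeed $[\S]^\top = \B^\top\S^\top\B = -\B^\top\S\B = -[\S]$. Consequently $\exp(s[\S])\in\SO(2d_v)$ for every $s\in\R$. This is the same fact used in Lemma \ref{lemma:expS-I}.

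Next I will establish the standard formula, valid for an arbitrary square $\D$ and an arbitrary direction $\mathbf{H}$ (not necessarily skew):
\begin{align}
    \partial\exp(\D)[\mathbf{H}] = \int_0^1 \exp(s\D)\,\mathbf{H}\,\exp((1-s)\D)\,ds .
\end{align}
I would derive it from the power series. The differential of $\D^n$ is $\sum_{k=0}^{n-1}\D^k\mathbf{H}\D^{n-1-k}$. The Beta integral $\int_0^1 s^k(1-s)^{n-1-k}ds = \frac{k!(n-1-k)!}{n!}$ then matches the coefficients of the series term by term. Alternatively, one can differentiate $F(s)=\exp(-s\D)\exp(s(\D+\varepsilon\mathbf{H}))$ in $s$ (a Duhamel argument). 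Justifying the exchange of sum and integral is routine by absolute convergence. I expect this step to be the only real obstacle, and even it is classical \citep{higham2008functions}.

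Then I will vectorise using $(A\otimes B)\vec C=\vec(BCA^\top)$. This gives
\begin{align}
    \frac{\partial\vec\exp[\S]}{\partial\vec[\S]} = \int_0^1 \exp((1-s)[\S])^\top\otimes\exp(s[\S])\,ds .
\end{align}
For each $s$, the integrand is a Kronecker product of two orthogonal matrices, so it is orthogonal with spectral norm exactly $1$. By the triangle inequality for the Bochner/Riemann integral, the spectral norm of the Jacobian is at most $\int_0^1 1\,ds = 1$.

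Equivalently, in matrix form, $\norm{\partial\exp([\S])[\mathbf{H}]}_F \le \int_0^1\norm{\mathbf{H}}_F\,ds = \norm{\mathbf{H}}_F$ by orthogonal invariance of the Frobenius norm, which is the same statement. Since the bound holds for all directions $\mathbf{H}$, it does not matter whether the Jacobian is viewed on all of $\R^{2d_v\times 2d_v}$ or restricted to skew-symmetric perturbations.
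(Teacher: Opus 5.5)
Your proposal is correct and follows essentially the same route as the paper: the integral representation of the Fréchet derivative of the exponential, vectorised into a Kronecker-product integrand whose factors are orthogonal because $[\S]$ is skew-symmetric, followed by the triangle inequality to get $\int_0^1 1\,ds = 1$ (your integrand matches the paper's after the substitution $s\mapsto 1-s$). You add two details the paper leaves implicit or cites: a derivation of the integral formula, and a check that $[\S]=\B^\top\S\B$ remains skew-symmetric even when $\B$ is not exactly orthonormal.
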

\begin{proof}
    From \cite{higham2008functions}, we have that the vectorised Jacobian for the matrix exponential is given by
    \begin{align}
         \frac{\partial\vec\exp[\S]}{\partial\vec[\S]} = \int_0^1 \exp(s[\S]^\top)\otimes \exp((1-s)[\S]) ds.
    \end{align}
    Therefore, we have 
    \begin{align}
        \left\lVert \frac{\partial\vec\exp[\S]}{\partial\vec[\S]} \right\rVert_2 \le \int_0^1 \norm{\exp(s[\S]^\top)}_2\norm{\exp((1-s)[\S])}_2 ds = \int_0^1 1 ds = 1,
    \end{align}
    as the matrices $s[\S]^\top, (1-s)[\S]$ are skew-symmetric for all $s\in[0, 1]$ hence the matrix exponential terms are always orthogonal.
\end{proof}

\begin{lemma}\label{lemma:S-X}
    There exists some constant $C>0$ such that 
    \begin{align}
        \left\lVert \frac{\partial\vec\S}{\partial\vec\X} \right\rVert_2 \le C\alpha.
    \end{align}
\end{lemma}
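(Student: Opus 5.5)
We need to bound the Jacobian of $\S = \frac{\alpha}{\sqrt{d_v}}\X\Tilde{\W}\X^\top$ with respect to $\X$. The plan is to compute the matrix differential directly, vectorise it, and bound each Kronecker factor using the initialisation together with the bounded spectral norm of $\X$.

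\textbf{Step 1: the differential.} By the product rule,
\begin{align}
    \partial\S = \frac{\alpha}{\sqrt{d_v}}\left( \partial\X\,\Tilde{\W}\X^\top + \X\Tilde{\W}\,\partial\X^\top \right).
\end{align}
Note that $\Tilde{\W}$ does not depend on $\X$, so this is the whole differential.

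\textbf{Step 2: vectorise.} Apply the identity $\vec(\mathbf{A}\mathbf{C}\mathbf{B}) = (\mathbf{B}^\top\otimes\mathbf{A})\vec\mathbf{C}$, already used in the proof of Theorem \ref{thm:jacobian}. For the transpose, use the commutation matrix $\mathbf{P}$, which satisfies $\vec(\partial\X^\top) = \mathbf{P}\vec(\partial\X)$. This gives
\begin{align}
    \frac{\partial\vec\S}{\partial\vec\X} = \frac{\alpha}{\sqrt{d_v}}\left( (\X\Tilde{\W}^\top\otimes\I_N) + (\I_N\otimes\X\Tilde{\W})\mathbf{P} \right).
\end{align}

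\textbf{Step 3: bound.} Three facts are needed:
\begin{itemize}
    \item the spectral norm of a Kronecker product is the product of the spectral norms;
    \item $\mathbf{P}$ is a permutation matrix, so $\norm{\mathbf{P}}_2 = 1$;
    \item by Theorem \ref{thm:query-key-init}, $\norm{\Tilde{\W}}_2 = 1$.
\end{itemize}
The triangle inequality then yields
\begin{align}
    \left\lVert \frac{\partial\vec\S}{\partial\vec\X} \right\rVert_2 \le \frac{2\alpha}{\sqrt{d_v}}\norm{\X}_2.
\end{align}
Since $\norm{\X}_2$ is bounded by assumption, this proves the claim with $C = 2\sup\norm{\X}_2/\sqrt{d_v}$. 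The bound is linear in $\alpha$, as required; I am taking $\alpha>0$, consistent with Theorem \ref{thm:condition}, and would otherwise write $|\alpha|$.

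The only real bookkeeping is in Step 2: getting the correct Kronecker ordering and handling the transpose term via the commutation matrix. Neither affects the norm bound, so there is no substantive obstacle.
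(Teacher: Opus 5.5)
Your proposal is correct and follows the paper's proof: the same differential of $\S=\frac{\alpha}{\sqrt{d_v}}\X\Tilde{\W}\X^\top$, the same Jacobian $\frac{\alpha}{\sqrt{d_v}}\bigl((\X\Tilde{\W}^\top\otimes\I_N)+(\I_N\otimes\X\Tilde{\W})\mathbf{P}\bigr)$ with an orthogonal commutation matrix, and the same bound from $\norm{\Tilde{\W}}_2=1$ and the boundedness of $\norm{\X}_2$. Dropping the $\partial\Tilde{\W}$ term, which vanishes with respect to $\X$, and giving the explicit constant $C=2\sup\norm{\X}_2/\sqrt{d_v}$ are minor tidy-ups of the same argument.
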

\begin{proof}
    Using the above representation of $\S$, the differential of $\S$ is given by
    \begin{align}
        \partial\S = \frac{\alpha}{\sqrt{d_v}}\left( \partial\X \Tilde{\W}\X^\top + \X\partial\Tilde{\W}\X^\top + \X\Tilde{\W}\partial\X^\top \right).
    \end{align}
    Hence, the Jacobian has the form
    \begin{align}
        \frac{\partial\vec\S}{\partial\vec\X} = \frac{\alpha}{\sqrt{d_v}}\left( (\X\Tilde{\W}^\top\otimes\I_{N}) + (\I_N\otimes\X\Tilde{\W})\K \right),
    \end{align}
    where $\K$ denotes the communication matrix which is used to convert the vectorisation of a transposed matrix to the vectorisation of the original matrix (i.e. $\vec\B^\top = \K\vec\B$). An important property of the communication matrix is orthogonality. 
    By our assumptions that $\norm{\X}_2$ is bounded and $\norm{\Tilde{\W}}_2=1$ we can conclude the result.
\end{proof}

We now turn to prove the theorem.

\begin{proof}
    We start off with using the fact that $\norm{\D_1\otimes\D_2}_2\le\norm{\D_1}_2\norm{\D_2}_2$. Therefore, 
    \begin{align}
        \norm{\J_1}_2 &\le \norm{\X}_2\norm{\W^V\W^O}_2\norm{\I_N}_2\left\lVert\frac{\partial\vec\A(\X)}{\partial\vec\X}\right\rVert_2 \\
        &\le C_0 \left\lVert\frac{\partial\vec\A(\X)}{\partial\vec\X}\right\rVert_2
    \end{align}
    since by assumption, the spectral norm of $\X$ is bounded, we have initialised $\W^V\W^O$ so that its spectral norm is 1 and $\norm{\I_n}_2=1$ for all $n$.
    This implies that we just need to control the Jacobian of our attention matrix.
    We proceed by computing the form of the differential of $\A(\X)$:
    \begin{align}
        \partial\A(\X) &= \partial\B(\exp[\S]-\I_{2d_v})\B^\top + \B\partial(\exp[\S]-\I_{2d_v})\B^\top + \B(\exp[\S]-\I_{2d_v})\partial \B^\top.
    \end{align}
    We then have
    \begin{align}
        \frac{\partial\vec\A(\X)}{\partial\vec\X} &= \EE_1 + \EE_2 + \EE_3
    \end{align}
    where
    \begin{align}
        \EE_1 &= (\B(\exp[\S] - \I_{2d_v})^\top\otimes\I_N)\frac{\partial\vec\B}{\partial\vec\X} \\
        \EE_2 &= (\B\otimes\B^\top)\frac{\partial\vec(\exp[\S]-\I_{2d_v})}{\partial\vec\X} \\
        \EE_3 &= (\I_N\otimes\B(\exp[\S] - \I_{2d_v})\K \frac{\partial\vec\B}{\partial\vec\X}.
    \end{align}
    where $\K$ denotes the communication matrix.
    \begin{enumerate}
        \item For the first term, we can apply Lemmas \ref{lemma:B}, \ref{lemma:expS-I} and \ref{lemma:B-X} with submultiplicity to show that $\norm{\EE_1}_2 \le O(\alpha)$.

        \item For the second term, we have
        \begin{align}
            \norm{\EE_2}_2 \le \norm{\B}_2^2\left\lVert \frac{\partial\vec(\exp[\S]-\I_{2d_v})}{\partial\vec\X} \right\rVert_2.
        \end{align}
        By Lemma \ref{lemma:B}, the spectral norm of the first term on the right-hand side is bounded by a constant.
        To control the second term, we note that 
        \begin{align}
            \frac{\partial\vec(\exp[\S]-\I_{2d_v})}{\partial\vec\X} =  \frac{\partial\vec\exp[\S]}{\partial\vec\X} = \frac{\partial\vec\exp[\S]}{\partial\vec[\S]}\frac{\partial\vec[\S]}{\partial\vec\X}.
        \end{align}
        From Lemma \ref{lemma:exp-jac}, the spectral norm of the first term on the right-hand side is bounded by 1.
        For the Jacobian of $[\S]=\B^\top\S\B$, we compute the differential as
        \begin{align}
            \partial[\S] = \partial\B^\top \S\B + \B^\top\partial\S \B + \B^\top\S \partial\B.
        \end{align}
        Therefore, the Jacobian has the form
        \begin{align}
            \frac{\partial\vec[\S]}{\partial\vec\X} = \hat{\EE}_1 + \hat{\EE}_2 + \hat{\EE}_3,
        \end{align}
        where
        \begin{align}
            \hat{\EE}_1 &= ((\S\B)^\top\otimes\I_{2d_v})\K\frac{\partial\vec\B}{\partial\vec\X} \\
            \hat{\EE}_2 &= (\B^\top\otimes\B^\top)\frac{\partial\vec\S}{\partial\vec\X}  \\
            \hat{\EE}_3 &= (\I_{2d_v}\otimes\B^\top\S)\frac{\partial\vec\B}{\partial\vec\X}
        \end{align}
        \begin{enumerate}
            \item For the first term, we can apply Lemmas \ref{lemma:S}, \ref{lemma:B} and \ref{lemma:B-X} to conclude that $\norm{\hat{\EE_2}}_2\le O(\alpha)$.
            
            
            \item For the second term, we can apply Lemmas \ref{lemma:B} and \ref{lemma:S-X} to conclude that $\norm{\hat{\EE}_2}_2\le O(\alpha)$.
            
            
            
            
            \item For the third term, we can apply Lemmas \ref{lemma:B}, \ref{lemma:S} and \ref{lemma:B-X} to conclude that $\norm{\hat{\EE}_3}_2\le O(\alpha)$.
            
        \end{enumerate}

        Putting all of these results together with the triangle inequality, we can conclude that  $\norm{\EE_2}_2\le O(\alpha)$.


        \item For the third term, we can apply Lemmas \ref{lemma:B}, \ref{lemma:expS-I} and \ref{lemma:B-X} and the fact that $\K$ is orthogonal to conclude that $\norm{\EE_3}_2\le O(\alpha)$.
        
    \end{enumerate}

    Putting all of these results together with the triangle inequality, we can conclude that $\norm{\J_1}_2\le C\alpha$ for some $C>0$.
\end{proof}

\subsection{Proof of Theorem \ref{thm:condition}}\label{proof:condition}
\begin{proof}
    By our assumptions the non-zero singular values of $\W^V\W^O$ are 1. 
    Further, it is a standard fact that the singular values of $\J_2 = (\W^V\W^O)\otimes\A(\X)$ are the product of singular values of $\W^V\W^O$ and $\A$.
    Therefore, from the bound $|\sigma_i(\A(\X)) - 1|\le \delta$, the non-zero singular values of $\J_2$ satisfy
    \begin{align}
        \sigma_i(\J_2) \in [1-\delta, 1+\delta].
    \end{align}
    Then by Weyl's inequality and Theorem \ref{thm:alpha-linear}, we have
    \begin{align}
        |\sigma_i(\J) - \sigma_i(\J_2)| \le \norm{\J_1}_2 \le C\alpha.
    \end{align}
    This implies that $\sigma_i(\J)\in [1-\delta-C\alpha, 1+\delta+C\alpha]$.
    Therefore, $\sigma_{\text{max}}(\J)\le 1+\delta+C\alpha$ and $\sigma_\text{min}(\J)\ge 1-\delta-C\alpha$.
    If we take $\alpha$ such that $1-\delta-C\alpha>0$, we can conclude the result.
\end{proof}



\section{Experimental Details}\label{app:experiment-details}

For the ViT architecture, we take the standard design from \cite{dosovitskiy2020image}. 
For the embedding layer, we use a patch size of 4, we add the \texttt{[cls]} token for classification and we have learnable 1D position embeddings.
This results in $n=50$ as images from MNIST are $28\times 28$.
We use 6 Transformer blocks and we take $d=64$.
For SSA and OSA, we take $h=4$ and for the MLPs, we have a width ratio of 4 and we use GELU activations \citep{hendrycks2016gaussian}; OSA also initialises $\alpha$ to be 0.1 and when using Newton-Schultz, we use $K=6$.
For the output layer, we take the representation of the \texttt{[cls]} token and project it to logits using a linear layer.
We do not use dropout in any models.

For training, we train for 10 epochs with a batch size of 128, learning rate of 3e-4, weight decay of 0.05, gradient norm clipping of 1.0.
We also use the standard cross-entropy loss and AdamW for training.

For the initialisation of weights in the ViT models, we use Xavier uniform initialisation for the weights of SSA and MLP and we initialise bias terms to zero. 
For the learnable embeddings and \texttt{[cls]} representation, we use the truncated Gaussian initialisation given by $\N(0, \sigma^2)\mathbf{1}_{[-2\sigma, 2\sigma]}$ ($\sigma=0.02$).

For OSA-Transformer, we use the initialisation scheme from Section \ref{sec:init}. 
We note that we also initialise the bias terms of the MLP layers to zero and we do not use bias terms for any of the weights in the OSA layers. 
For the embedding layer and output layer, we use the same initialisation as the ViT models.

For further experimental results, in Table \ref{tab:test-acc}, we provide the final test accuracy of models after training.
\begin{table}[h]
    \centering
    \caption{Test accuracy for OSA-Transformer and ViT models trained on MNIST classification after 10 epochs.}
    \label{tab:test-acc}
    \resizebox{\textwidth}{!}{
    \begin{tabular}{lccccc}
        \toprule
        \textbf{Model} & \textbf{OSA (QR)} & \textbf{OSA (NS)} & \textbf{ViT} & \textbf{ViT (no skip)} & \textbf{ViT (no skip, no LN)} \\
        \midrule
        \textbf{Test Accuracy} & 97.2\% & 96.9\% & 97.2\% & 94.6\% & 79.6\% \\
        \bottomrule
    \end{tabular}
    }
\end{table}

\end{document}